\newcommand{\be}{\begin{equation}}
\newcommand{\ee}{\end{equation}}
\newcommand{\bea}{\begin{eqnarray}}
\newcommand{\eea}{\end{eqnarray}}
\renewcommand{\vec}[1]{ \mbox{$\mathbf {#1}$}}
\DeclareMathOperator*{\argmin}{\arg\!\min}
\newcommand{\ei}{\end{itemize}}
\newcommand{\bi}{\begin{itemize}}
\newcommand{\MB}{\left[\begin{array}}
	\newcommand{\ME}{\end{array}\right]}
\newtheorem{theorem}{Theorem}
\newtheorem{remark}{Remark}
\begin{document}
	\title{Sequential Outlier Detection based on Incremental Decision Trees}
	\author{ Mohammadreza Mohaghegh Neyshabouri and Suleyman S. Kozat, {\em Senior Member, IEEE} 
	\thanks{
		This work is supported in part by Turkish Academy of Sciences Outstanding Researcher Programme, TUBITAK Contract No. 117E153.
		
		M. Mohaghegh N. and S. S. Kozat are with the Department of Electrical and Electronics Engineering, Bilkent University, Ankara, Turkey, e-mail: \{mohammadreza, kozat\}@ee.bilkent.edu.tr, tel: +90 (312) 290-2336. 
		}}
	\maketitle
	\begin{abstract}
		We introduce an online outlier detection algorithm to detect outliers in a sequentially observed data stream. For this purpose, we use a two-stage filtering and hedging approach. In the first stage, we construct a multi-modal probability density function to model the normal samples. In the second stage, given a new observation, we label it as an anomaly if the value of aforementioned density function is below a specified threshold at the newly observed point. In order to construct our multi-modal density function, we use an incremental decision tree to construct a set of subspaces of the observation space. We train a single component density function of the exponential family using the observations, which fall inside each subspace represented on the tree. These single component density functions are then adaptively combined to produce our multi-modal density function, which is shown to achieve the performance of the best convex combination of the density functions defined on the subspaces. As we observe more samples, our tree grows and produces more subspaces. As a result, our modeling power increases in time, while mitigating overfitting issues. In order to choose our threshold level to label the observations, we use an adaptive thresholding scheme. We show that our adaptive threshold level achieves the performance of the optimal pre-fixed threshold level, which knows the observation labels in hindsight. Our algorithm provides significant performance improvements over the state of the art in our wide set of experiments involving both synthetic as well as real data.
	\end{abstract}
	\begin{keywords}
		Anomaly detection, exponential family, online learning, mixture-of-experts.
	\end{keywords}
	\begin{center}
		\bfseries EDICS Category:  MLR-SLER, MLR-APPL, MLR-LEAR.
	\end{center}
	
	\section{Introduction}\label{intro}
	\subsection{Preliminaries}
	We study sequential outlier or anomaly detection \cite{Chandola}, which has been extensively studied due to its applications in a wide set of problems from network anomaly detection \cite{netano1,netano2,netano3} and fraud detection \cite{fraud3} to medical anomaly detection \cite{medical1} and industrial damage detection \cite{industrial}. In the sequential outlier detection problem, at each round $t$, we observe a sample vector $\vec{x}_t\in \mathbb{X}$ and label it as ``normal'' or ``anomalous" based on the previously observed sample vectors, i.e., $\vec{x}_{t-1},...,\vec{x}_1$, and their possibly revealed true labels. After we declare our decision, we may or may not observe the true label of $\vec{x}_t$. The objective is to minimize the number of mislabeled samples.\par 
	
	For this purpose, we use a two-stage ``filtering" and ``hedging" method \cite{raginsky}. In the ``filtering" stage, we build in an online manner ``a model" for ``normal" samples based on the information gained from the previous rounds. Then, in the ``hedging" stage, we decide on the label of the new sample based on its conformity to our model of normal samples. A common approach in constructing the aforementioned model is to assume that the normal data is generated from an independent and identically distributed (i.i.d.) random sequence \cite{Chandola}. Hence, in the first stage of our algorithm, we model the normal samples using a probability density function, which can also be considered as a scoring function \cite{raginsky}. However, note that the true underlying model of the normal samples can be arbitrary in general (or may not even exist) \cite{Chandola}. Therefore, we approach the problem in a competitive algorithm framework \cite{competitive}. In this framework, we define a class of models called ``competition class" and aim to achieve the performance of the best model in this class. Selecting a rich class of powerful models as the competition class enables us to perform well in a wide set of scenarios \cite{competitive}. Hence, as detailed later, we choose a strong set of probability functions to compete against and seek to sequentially learn the best density function which fits to the normal data. Hence, while refraining from making any statistical assumptions on the underlying model of the samples, we guarantee that our performance is (at least) as well as the best density function in our competition class. \par 
	
	We emphasize that there exist nonparametric algorithms for density estimation \cite{ozkan1}, the parametric approaches have recently gained more interest due to their faster convergence \cite{ozkan2,kaan}. However, the parametric approaches fail if the assumed model is not capable of modeling the true underlying distribution \cite{competitive}. In this context, exponential-family distributions \cite{exfam} have attracted significant attention, since they cover a wide set of parametric distributions \cite{raginsky}, and successfully approximate a wide range of nonparametric probabilistic models as well\cite{barron}. However, single component density functions are usually inadequate to model the data in highly challenging real life applications \cite{igmm}. In this paper, in order to effectively model multi-modal distributions, we partition the space of samples into several subspaces using highly effective and efficient hierarchical structures, i.e., decision trees \cite{decisiontree}. The observed samples, which fall inside each subspace are fed to a single component exponential-family density estimator. We adaptively combine all these estimators in a mixture-of-experts framework \cite{mixofex} to achieve the performance of their best convex combination.\par
	
	We emphasize that the main challenge using a partitioning approach for multi-modal density estimation is to define a proper partition of the space of samples \cite{igmm}. Here, instead of sticking to a pre-fixed partition, we use an incremental decision tree \cite{decisiontree} approach to partition the space of samples in a nested structure. Using this method we avoid overtraining, while efficiently modeling complex distributions composed of a large number of components \cite{decisiontree}. As the first time in the literature, in order to increase our modeling power with time, we apply a highly powerful incremental decision tree \cite{decisiontree}. Using this incremental tree, whenever we believe that the samples inside a subspace cannot belong to a single component distribution, we split the subspace into two disjoint subspaces and start training two new single component density estimators on the recently emerged subspaces. Hence, our modeling power can potentially increase with no limit (and increase if needed), while mitigating the overfitting issues. \par

	In order to decide on the label of a given sample, as widely used in the literature \cite{raginsky}, we evaluate the value of our density function in the new data point $\vec{x}_t$ and compare it against a threshold. If probability density is lower than the threshold, the sample is labeled as anomalous. While this is a shown to be an effective strategy for anomaly detection, setting the threshold is a notoriously difficult problem \cite{raginsky}. Hence, instead of committing to a fixed threshold level, we use an adaptive thresholding scheme and update the threshold level whenever we receive a feedback on the true label of the samples. We show that our thresholding scheme achieves the performance of the best fixed threshold level selected in hindsight.
	
	\subsection{Prior Art and Comparisons}
	Various anomaly detection methods have been proposed in the literature that utilize Neural Networks \cite{nn1}, Support Vector Machines \cite{svm3}, Nearest Neighbors \cite{nnb1}, clustering \cite{clustering1} and statistical methods including parametric \cite{par1} and nonparametric \cite{npar1} density estimation. In the case when the normal data conform to a probability density function, the anomaly detection algorithms based on the parametric density estimation method are shown to provide superior performance \cite{evaluation}. For this reason, we adopt the parametric probability estimation based approach. In \cite{raginsky}, authors have introduced an online algorithm to fit a single component density function of the exponential-family distributions to the stream of data. However, since the real life distributions are best described using multi-modal PDFs rather than single component density functions \cite{okde}, we seek to fit multi-modal density functions to the observations. There are various multi-modal density estimation methods in the literature. In \cite{igmm}, authors propose a sequential algorithm to learn the multi-modal Gaussian distributions. However, as discussed in their paper, this algorithm provides satisfactory results only if a temporary coherency exists among subsequent observations. In \cite{okde}, an online variant of the well-known Kernel Density Estimation (KDE) method is proposed. However, no performance guarantees are provided for any of the algorithms. In this paper, we provide a multi-modal density estimation method using an incremental tree with strong performance bounds, which are guaranteed to hold in an individual sequence manner through a regret formulation \cite{raginsky}.\par

	Decision trees are widely studied in various applications including coding \cite{coding}, prediction \cite{prediction,herotree}, regression \cite{vanli} and classification \cite{classification}. These structures are shown to provide highly successful results due to their ability to refrain from overtraining while providing significant modeling power. In this paper, we adapt a novel notion of incremental decision trees \cite{incremental} to the density estimation framework. Using this decision tree, we train a set of single-component density estimators with carefully chosen sets of data samples. We combine these single-component estimators in an ensemble learning \cite{EL} framework to approximate the underlying multi-modal density function and show that our algorithm achieves the performance of the best convex combination of the single component density estimators defined on the, possibly infinite depth, decision tree. \par
	
	Adaptive thresholding schemes are widely used for anomaly detection algorithms based on density estimation \cite{Chandola}. While most of the algorithms in the literature do not provide guarantees for their anomaly detection performance, a surrogate regret bound of $O(\sqrt{t})$ is provided in \cite{raginsky}. However, since in real life applications the labels are revealed in a small portion of rounds \cite{semisupervised}, stronger performance guarantees are highly desirable. We provide an adaptive thresholding scheme with a surrogate regret bound of $O(\log{t})$. Hence, our algorithm steadily achieves the performance of the best threshold level chosen in hindsight.

	\subsection{Contributions}
	Our main contributions are as follows:
	\begin{itemize}
		\item For the first time in the literature, we adapt the notion of incremental decision trees to the multi-modal density estimation framework. We use this tree, which can grow to an infinite depth, to partition the observations space into disjoint subspaces and train different density functions on each subspace. We adaptively combine these density functions to achieve the performance of the best multi-modal density function defined on the tree.
		\item We provide guaranteed performance bounds for our multi-modal density estimation algorithm. Due to our competitive algorithm framework, our performance bounds are guaranteed to hold in an individual sequence manner.
		\item Due to our individual sequence perspective, our algorithm can be used in unsupervised, semi-supervised and supervised settings.
		\item Our algorithm is truly sequential, such that no a priori information on the time horizon or the number of components in the underlying probability density function is required.
		\item We propose an adaptive thresholding scheme that achieves a regret bound of $O(\log{t})$ against the best fixed threshold level chosen in hindsight. This thresholding scheme improves the state-of-the-art $O(\sqrt{t})$ regret bound provided in \cite{raginsky}.
		\item We demonstrate significant performance gains in comparison to the state-of-the-art algorithms through extensive set of experiments involving both synthetic and real data.
	\end{itemize}

	\subsection{Organization}
	In section \ref{ProbDes}, we formally define the problem setting and our notation. Next, we explain our single-component density estimation methods in section \ref{sec2}. In section \ref{BT}, we introduce our decision tree and explain how we use it to incorporate the single-component density estimators and create our multi-modal density function. Then, we explain the anomaly detection step of our algorithm in section \ref{anomalydetection}, which completes our algorithm description. In section \ref{simulations} we demonstrate the performance of our algorithm against the state-of-the-art methods on both synthetic and real data. We finish with concluding remarks in section \ref{conclusion}.
	
	\section{Problem Description}\label{ProbDes}
	In this paper, all vectors are column vectors and denoted by boldface lower case letters. For a $K$-element vector $\vec{u}$, ${u}_i$ represents the $i^{\text{th}}$ element and $\lVert \vec{u} \rVert=\sqrt{\vec{u}^T\vec{u}}$ is the $l^2$-norm, where $\vec{u}^T$ is the ordinary transpose. For two vectors of the same length $\vec{u}$ and $\vec{v}$, $\langle \vec{u},\vec{v} \rangle = \vec{u}^T\vec{v}$ represents the inner product. We show the indicator function by $\mathbf{1}_{\lbrace \text{condition} \rbrace}$, which is equal to $1$ if the condition holds and $0$ otherwise. \par 
	
	We study sequential outlier detection problem, where at each round $t \geq 1$, we observe a sample vector $\vec{x}_t \in \mathbb{R}^m$ and seek to determine whether it is anomalous or not. We label the sample vector $\vec{x}_t$ by $\hat{d}_t=-1$ for normal samples and $\hat{d}_t=1$ for anomalous ones, where $d_t$ corresponds to the true label which may or may not be revealed. In general, the cost of making an error on normal and anomalous data may not be the same. Therefore, we define ${C}_{d_t}$ as the cost of making an error while the true label is $d_t$. The objective is to minimize the accumulated cost in a series of rounds, i.e., $\sum_{t=1}^{T} {C}_{d_t}\mathbf{1}_{\lbrace \hat{d}_t \neq d_t \rbrace}$.\par
	 
	In our two step approach, we first introduce an algorithm for probability density estimation, which learns a multi-modal density function that fits ``best" to the observations. This density function can be seen as a scoring function determining the normality of samples. Due to the online setting of our problem, at each round $t$, our density function estimate, denoted by $\hat{p}_t(\cdot)$, is a function of previously observed samples and their possibly revealed labels, i.e.,
	\begin{equation}
	\hat{p}_t(\cdot)=f(\vec{x}_1,\vec{x}_2,...,\vec{x}_{t-1}, d_1,d_2,...,d_{t-1}).
	\end{equation}
	Note that in general, even if the samples are not generated from a density function, e.g., deterministic framework \cite{deterministic}, our estimate $\hat{p}_t(\cdot)$ can be seen as a scoring function determining the normality of the samples. As widely used in the literature \cite{Murphy}, we measure the accuracy of our density function estimate $\hat{p}_t$ by the log-loss function
	\begin{equation}\label{eq:logloss}
	l_P(\hat{p}_t(\vec{x}_t))=-\log(\hat{p}_t(\vec{x}_t))).
	\end{equation}
	
	In order to refrain from any statistical assumptions on the normal data, we work in a competitive framework \cite{competitive}. In this framework we seek to achieve the performance of the best model in a class of models called the competition class. We use the notion of ``regret" as our performance measure in both density estimation and anomaly detection steps. The regret of a density estimator producing the density function $\hat{p}_t(\cdot)$ against a density function $p(\cdot)$ at round $t$ is defined as
	\begin{equation}
	r_{P,t}(\hat{p}_t(\vec{x}_t),p(\vec{x}_t))=-\log(\hat{p}_t(\vec{x}_t))+\log(p(\vec{x}_t)),
	\end{equation}
	where selection of $p(\cdot)$ will be clarified later. We denote the accumulated density estimation regret up to time $T$ by
	\begin{equation}
	R_{P,T}=\sum_{t=1}^{T}r_{P,t}(\hat{p}_t(\vec{x}_t),p(\vec{x}_t)).
	\end{equation}
	\par 
	In order to produce our decision on the label of observations being ``normal" or ``anomalous", at each round $t$, we observe the new sample $\vec{x}_t$ and declare our decision by thresholding $\hat{p}_t(\vec{x}_t)$ as
	\begin{equation}
	\hat{d}_t=\text{sign}(\tau_t-\hat{p}_t(\vec{x}_t)),
	\end{equation}
	where $\tau_t$ is the threshold level. After declaring our decision, we may or may not observe the true label $d_t$ as a feedback. We use this information to optimize $\tau_t$ whenever we observe the correct decision $d_t$.
	We define the loss of thresholding $\hat{p}_t(\vec{x}_t)$ by $\tau_t$ as 
	\begin{equation} \label{eq:anomalyloss}
	l_A(\tau_t,\hat{p}_t(\vec{x}_t),d_{t})=C_{d_t}\mathbf{1}_{\lbrace \text{sign}(\tau_t-\hat{p}_t(\vec{x}_t))\neq d_t \rbrace}.
	\end{equation}
	We define the regret of choosing the threshold value $\tau_t$ against a specific threshold $\tau$ (which can even be the unknown ``best" threshold that minimizes the cumulative error) at round $t$ by
	\begin{equation}
	r_{A,t}(\tau_t,\tau)=l_A(\tau_t,\hat{p}_t(\vec{x}_t),d_{t})-l_A(\tau,\hat{p}_t(\vec{x}_t),d_{t}).
	\end{equation}
	We denote the accumulated anomaly detection regret up to time $T$ by
	\begin{equation} \label{eq:anomalyregret}
	R_{A,T}=\sum_{t=1}^{T}r_{A,t}(\tau_t,\tau).
	\end{equation}
	\par 
	We emphasize that the main challenge in ``two-step" approaches for anomaly detection is to construct a density function $\hat{p}_t(\cdot)$, which powerfully models the observations distribution. For this purpose, in section \ref{sec2}, we first introduce an algorithm, which achieves the performance of a wide range of single component density functions. Based on this algorithm, in section \ref{BT}, we use a nested tree structure to construct a multi-modal density estimation algorithm. In section \ref{anomalydetection}, we introduce our adaptive thresholding scheme, which will be used on the top of the density estimator described in section \ref{BT} to form our complete anomaly detection algorithm.
	
	\section{Single Component Density Estimation} \label{sec2}
	
	In this section we introduce an algorithm, which sequentially achieves the performance of the best single component distribution in the exponential family of distributions \cite{exfam}. At each round $t$, we observe a sample vector $\vec{x}_t \in \mathbb{R}^m$, drawn from an exponential-family distribution
	\begin{equation}\label{eq:expofam}
	f(\vec{x}_t)=h(\vec{x}_t)\exp{(\langle \eta,\vec{s}_t \rangle\ - A(\eta))},
	\end{equation}
	where
	\begin{itemize}
		\item $\eta \in \mathbf{F}$ is the unknown ``natural parameter" of the exponential-family distribution. Here, $\mathbf{F} \subset \mathbb{R}^d$ is a bounded convex set.
		\item $h(\vec{x}_t)$ is the ``base measure function" of the exponential-family distribution.
		\item $A(\eta)$ is the ``log-partition function" of the distribution.
		\item $\vec{s}_t\in \mathbb{R}^d$ is the ``sufficient statistics vector" of $\vec{x}_t$. Given the type of the exponential-family distribution, e.g., Gaussian, Bernoulli, Gamma, etc., $\vec{s}_t$ is calculated as a function of $\vec{x}_t$, i.e., $\vec{s}_t=T(\vec{x}_t)$.
	\end{itemize}
	With an abuse of notation, we put the ``base measure function" $h(\vec{x}_t)$ inside the exponential part by setting $\vec{s}_t=[\vec{s}_t;\log(h(\vec{x}_t))]$ and $\eta=[\eta;1]$. Hence, from now on, we write
	\begin{equation}
	f(\vec{x}_t)=\exp{(\langle \eta,\vec{s}_t \rangle\ - A(\eta))}.
	\end{equation}
	At each round $t$, we estimate the natural parameter $\eta$ based on the previously observed sample vectors, i.e., $\lbrace \vec{x}_1,\vec{x}_2,...,\vec{x}_{t-1} \rbrace$, and denote our estimate by $\hat{\eta_t}$. The density estimate at time $t$ is given by
	\begin{equation}
	\hat{f}_t(\vec{x}_t)=\exp{(\langle \hat{\eta}_t,\vec{s}_t \rangle\ - A(\hat{\eta}_t))}.
	\end{equation}
	In order to produce our estimate $\hat{\eta}_t$, we seek to minimize the accumulated loss we would suffer following this $\hat{\eta}_t$ during all past rounds, i.e.,
	\begin{equation}\label{eq:ML}
	\hat{\eta}_t=\argmin_{\eta}{\sum_{\tau=1}^{t-1} l({\eta},\vec{x}_{\tau})},
	\end{equation}
	where
	\begin{equation}\label{eq:loss}
	l({\eta},\vec{x}_{\tau})=-\langle \eta,\vec{s}_{\tau} \rangle\ + A(\eta).
	\end{equation}
	This is a convex optimization problem. Finding the point in which the gradient is zero, it can be seen that it suffices to choose the $\hat{\eta}_t$ such that
	\begin{equation}\label{eq:MLA}
	m_{\hat{\eta}_{t}}=\frac{\sum_{\tau=1}^{t-1} s_{\tau}}{t-1},
	\end{equation}
	where $m_{\hat{\eta}_{t}}$ is the mean of $\vec{s}_t$ when $\vec{x}_t$ is distributed with the natural parameter $\hat{\eta}_{t}$.\par 
	Note that the memory demand of our single-component density estimator does not increase with time, as is suffices to keep the sample mean of the ``sufficient statistic vectors", i.e., $s_{\tau}$'s, in memory. The complete pseudo code of our single component density estimator is provided in Alg. \ref{tab:table2}.
	
	\begin{algorithm}[t]
		\caption{Single Component Density Estimator}
		\label{tab:table2}
		\begin{algorithmic}[1]
			\algsetup{linenosize=\tiny}
			\STATE Initialize $m_s^0=0$
			\STATE Select $\hat{\eta}_1 \in \mathbf{F}$ arbitrarily
			\FOR {$t=1$ \TO $T$ }
			\STATE Observe $\vec{x}_t \in \mathbb{R}^m$
			\STATE Calculate $\vec{s}_t=T(\vec{x}_t)$
			\STATE Suffer the loss $l(\hat{\eta}_{t},\vec{x}_t)$ according to \eqref{eq:loss}
			\STATE Calculate $m_s^t=\frac{m_s^{t-1}\times(t-1)+\vec{s}_t}{t}$
			\STATE Calculate $\hat{\eta}_{t+1}$ s.t. $m_{\hat{\eta}_{t+1}}=m_s^t$
			\ENDFOR
		\end{algorithmic} 	
	\end{algorithm} 
	
\section{Multimodal Density Estimation}\label{BT}
	In this section, we extend our basic density estimation algorithm to model the observation vectors using multi-modal density functions of the form
	\begin{equation}\label{multimodalden}
	p(\vec{x}_t)=\sum_{n=1}^{N} \alpha_n f_n(\vec{x}_t),
	\end{equation}
	where each $f_n(\cdot)$ is an exponential-family density function as in \eqref{eq:expofam} and $(\alpha_1,...,\alpha_N)$ is a probability simplex, i.e., $\forall n: \alpha_n \geq 0$, $\sum_{n=1}^{N} \alpha_n=1$. \par 
	In order to construct such model, we split the space of sample vectors into several subspaces and run an independent copy of the Alg. \ref{tab:table2} in each subspace. Each one of these density estimators observe only the sample vectors, which fall into their corresponding subspace. We adaptively combine the aforementioned single component density estimators to produce our multi-modal density function. In the following, in Section \ref{sub:mixture}, we first suppose that a set of subspaces is given and explain how we combine the density estimators running over the subspaces. Then, in Section \ref{IDT}, we explain how we construct our set of subspaces using an incremental decision tree.

	\subsection{Mixture of Single Component Density Estimators}\label{sub:mixture}
	
	\begin{figure}[t]\centering
		\includegraphics[width=0.7\linewidth]{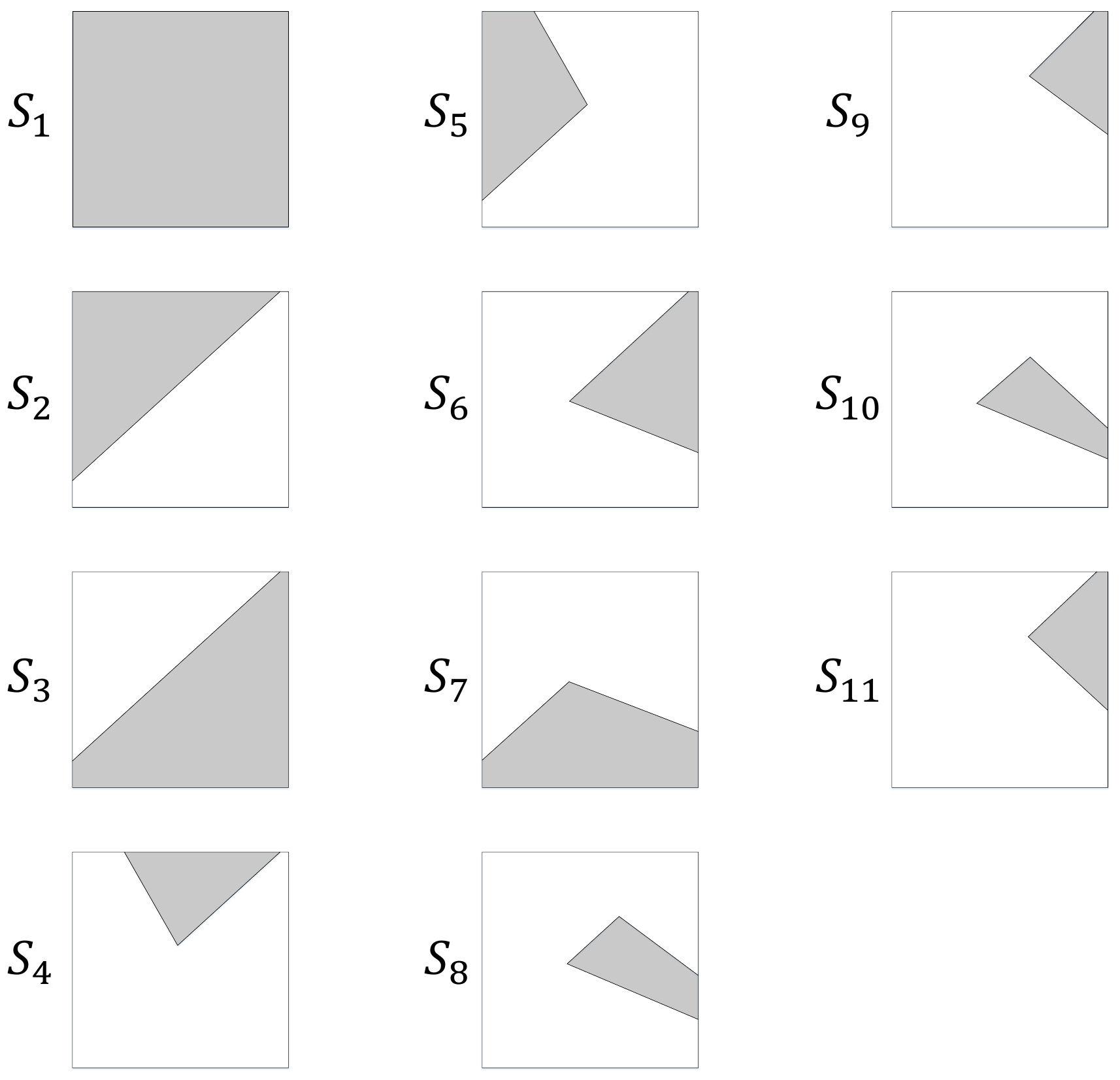}
		\caption{An example of $11$ subspaces of $\mathbb{R}^2$. The square shapes represent the whole $\mathbb{R}^2$ space and the gray regions show subspaces.} \label{fig:subspaces}
	\end{figure}

	Let $\mathcal{S}=\lbrace S_1,...,S_N \rbrace$ be a given set of $N$ subspaces of the observation space. For instance, in Fig. \ref{fig:subspaces} a set of $11$ subspaces in $\mathbb{R}^2$ is shown. We run $N$ independent copies of the Alg. \ref{tab:table2} in these subspaces and denote the estimated density function corresponding to $S_i$ at round $t$ by $\tilde{f}_{t,i}(\cdot)$. We adaptively combine $\tilde{f}_{t,i}(\cdot)$, $i=1,...N$, in a mixture-of-experts setting using the well known Exponentiated Gradient (EG) algorithm \cite{EG}. At each round $t$, we declare our multi-modal density estimation as
	\begin{equation}\label{eq:p}
	\tilde{p}_t(\cdot)=\sum_{i=1}^{N} \tilde{\alpha}_{t,i} \tilde{f}_{t,i}(\cdot),
	\end{equation}
	where $\tilde{\alpha}_{1,i}$'s are initialized to be $1/N$ for $i=1,...,N$. After observing $\vec{x}_t$, we suffer the loss $l(\tilde{p}_t,\vec{x}_t)=-\log(\tilde{p}_t(\vec{x}_t))$ and update the mixture coefficients as
	\begin{equation}\label{eq:alpha}
	\tilde{\alpha}_{t+1,i}=\tilde{\alpha}_{t,i}\exp 
	\left(
	\theta
	\frac{\tilde{f}_{t,i}(\vec{x}_t)}{\tilde{p}_t(\vec{x}_t)}
	\right),
	\end{equation}
	where $\theta$ is the learning rate parameter. The following proposition shows that in a $T$ rounds trial, we achieve a regret bound of $O(\sqrt{T})$ against the multi-modal density estimator with the best $\tilde{\alpha}$ variables (in the log-loss sense), i.e., the best convex combination of our single component density functions.
	\begin{theorem}\label{pro1}
	For a $T$ round trial, let $R$ be a bound such that $\max_{t,n}\lbrace \tilde{f}_{t,n} \rbrace \leq R,$ for all $t,n$. Let ${p}^*_t(\cdot)=\sum_{n=1}^{N} {{\alpha}^*_n} \tilde{f}_{t,n}(\cdot)$ be the optimal (in the accumulated log-loss sense) convex combination of $\tilde{f}_{t,i}$'s with fixed coefficients $\left({\alpha}^*_1,...,{\alpha}^*_N\right)$ selected in hindsight. If the accumulated log-loss of ${p}^*_t(\vec{x}_t)$ is upper bounded as
	\begin{equation}\label{eq:at}
	\sum_{t=1}^{T} l_P\left({p}^*_t(\vec{x}_t)\right)\leq AT,
	\end{equation}
	we achieve a regret bound as
	\begin{equation}\label{eq:regretp}
	R_{P,T}(\tilde{p}_t(\cdot),p^*_t(\cdot))\leq
	\sqrt{ 2AT\ln{N} }+
	\frac{R^2 \ln{N}}{2}.
	\end{equation}
	\end{theorem}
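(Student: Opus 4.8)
The plan is to recognize the update \eqref{eq:alpha} as the Exponentiated Gradient (equivalently, multiplicative-weights) algorithm run on the probability simplex against the convex log-loss $l_P(\tilde p_t(\vec{x}_t)) = -\log(\tilde p_t(\vec{x}_t))$, and then to invoke the standard EG regret analysis of \cite{EG}. The two structural facts I need are that $l_P$ is convex in the weight vector $\tilde\alpha_t = (\tilde\alpha_{t,1},\dots,\tilde\alpha_{t,N})$, and that its per-coordinate gradient at $\tilde\alpha_t$ is exactly $-\tilde f_{t,i}(\vec{x}_t)/\tilde p_t(\vec{x}_t)$, which is precisely the exponent appearing in \eqref{eq:alpha} (after the implicit simplex normalization). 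Thus the multi-modal estimator is performing exponentiated gradient descent on $\{l_P\}$, and the target reduces to a textbook EG bound of the form $\frac{\ln N}{\theta} + \frac{\theta}{2}(\text{second-order penalty})$, tuned in $\theta$.

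First I would linearize. By convexity of $-\log$, the instantaneous regret satisfies $r_{P,t} = \log\!\big(p^*_t(\vec{x}_t)/\tilde p_t(\vec{x}_t)\big) \le \langle \nabla l_P(\tilde\alpha_t),\, \tilde\alpha_t - \alpha^*\rangle = \sum_{i} \frac{\tilde f_{t,i}(\vec{x}_t)}{\tilde p_t(\vec{x}_t)}(\alpha^*_i - \tilde\alpha_{t,i})$, so it suffices to bound the accumulated linearized regret. For this I would use the relative-entropy potential $\Phi_t = D(\alpha^* \,\|\, \tilde\alpha_t) = \sum_i \alpha^*_i \ln(\alpha^*_i/\tilde\alpha_{t,i})$. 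A one-line computation gives the per-round drop $\Phi_t - \Phi_{t+1} = \theta\langle \alpha^*, g_t\rangle - \ln Z_t$, where $g_{t,i} = \tilde f_{t,i}(\vec{x}_t)/\tilde p_t(\vec{x}_t)$ and $Z_t = \sum_i \tilde\alpha_{t,i}e^{\theta g_{t,i}}$ is the normalizer. Summing telescopes, and using $\Phi_{T+1}\ge 0$ together with $\Phi_1 = D(\alpha^*\,\|\,\text{uniform}) \le \ln N$ leaves the standard form $\theta\sum_t \langle\alpha^*,g_t\rangle - \sum_t \ln Z_t \le \ln N$.

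The remaining work is to bound $\ln Z_t$. Because the weighted mean of the exponents is pinned, $\langle \tilde\alpha_t, g_t\rangle = \sum_i \tilde\alpha_{t,i}\tilde f_{t,i}/\tilde p_t = 1$, a second-order expansion of $\ln Z_t$ (via Hoeffding's lemma, or $e^x \le 1+x+x^2$ on the relevant range) gives $\ln Z_t \le \theta + c\,\theta^2 \langle \tilde\alpha_t, g_t^2\rangle$ for an absolute constant $c$, and the uniform bound $\tilde f_{t,n}\le R$ is what keeps the second moment under control (it is the only place $R$ enters). Substituting into the telescoped inequality and using $\sum_t\langle\alpha^*,g_t\rangle - T = \sum_t\big(p^*_t(\vec{x}_t)/\tilde p_t(\vec{x}_t) - 1\big)\ge R_{P,T}$, I arrive at an inequality of the shape $R_{P,T}\le \frac{\ln N}{\theta} + c\,\theta\,(\text{penalty})$.

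I expect the main obstacle to be the control of this penalty and the extraction of the loss-dependent rate $\sqrt{2AT\ln N}$ rather than a plain $\sqrt{T}$. The gradient entries $g_{t,i} = \tilde f_{t,i}/\tilde p_t$ are ratios that are not bounded a priori — the mixture value $\tilde p_t$ can be arbitrarily small — so a crude worst-case range bound would ruin the constants and, more importantly, would not yield a comparator-loss-dependent rate. The delicate step is therefore to couple the accumulated penalty to the comparator's total log-loss through the hypothesis \eqref{eq:at}, i.e. to run the \emph{improvement-for-small-losses} refinement of the multiplicative-weights analysis so that the penalty scales like $R^2\cdot AT$ rather than $R^2 T$; tuning $\theta \sim \sqrt{2\ln N/(AT)}$ then balances the two terms to give the leading $\sqrt{2AT\ln N}$, while the residual initial-potential contribution collapses to the additive $\frac{R^2\ln N}{2}$ of \eqref{eq:regretp}. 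A final bookkeeping point is that this tuning uses $A$ and $T$, which is permitted by the hindsight, individual-sequence nature of the regret framework adopted here.
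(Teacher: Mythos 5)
Your route is, at bottom, the same one the paper takes: the paper's proof consists of bounding the relative entropy between the comparator $\left({\alpha}^*_1,\dots,{\alpha}^*_N\right)$ and the uniform initialization by $\ln N$ and then invoking the Exponentiated Gradient guarantee of \cite{EG} with the tuned rate $\theta=\frac{2\sqrt{\ln N}}{R\sqrt{2AT}+R^2\sqrt{\ln N}}$. Your linearization of the log-loss, the relative-entropy potential $\Phi_t=D(\alpha^*\,\|\,\tilde\alpha_t)$, the per-round drop $\theta\langle\alpha^*,g_t\rangle-\ln Z_t$, the telescoping, and the bound $\Phi_1\le\ln N$ are exactly the unpacked contents of that citation, and those steps are correct.

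However, the step you defer as ``delicate'' is not a refinement to be filled in later; it is the entire difficulty, and your sketch of how it resolves does not go through. First, your assertion that the hypothesis $\tilde{f}_{t,n}\le R$ ``keeps the second moment under control'' is false as stated: the gradient entries are ratios $g_{t,i}=\tilde{f}_{t,i}(\vec{x}_t)/\tilde{p}_t(\vec{x}_t)$, and an upper bound on the numerators gives no control whatsoever on these ratios, since $\tilde{p}_t(\vec{x}_t)=\sum_j\tilde\alpha_{t,j}\tilde{f}_{t,j}(\vec{x}_t)$ can be arbitrarily small. The best one can say is $\langle\tilde\alpha_t,g_t^2\rangle\le \max_i \tilde{f}_{t,i}(\vec{x}_t)/\tilde{p}_t(\vec{x}_t)\le R/\tilde{p}_t(\vec{x}_t)$, which is unbounded; consequently neither Hoeffding's lemma (which needs a bounded range) nor $e^x\le 1+x+x^2$ (which needs $|\theta g_{t,i}|\le 1$) is applicable. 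Second, the proposed coupling to the hypothesis \eqref{eq:at} cannot repair this, because \eqref{eq:at} controls the \emph{comparator's} mixture $p^*_t(\vec{x}_t)$, whereas the quantity appearing in the denominators of the penalty is the \emph{algorithm's} mixture $\tilde{p}_t(\vec{x}_t)$; an inequality transferring control from one to the other is precisely the regret bound being proved, so the argument as sketched is circular. The small-loss (``improvement for small losses'') refinements you invoke hold for Hedge with losses in $[0,1]$, or for EG on log-loss under a floor (or bounded range-ratio) assumption on the experts' predictions, as in the portfolio-selection literature; with only the upper bound $\tilde{f}_{t,n}\le R$ given in the theorem, no such argument is available. (To be candid, the paper's own two-line proof buries this same issue inside its citation of \cite{EG}, whose guarantee also presupposes gradient-range control that the stated hypothesis does not supply; but judged as a standalone proof, your proposal is incomplete at exactly this named step.)
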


	\begin{proof}
		Denoting the relative entropy distance \cite{red} between the best probability simplex $\left({\alpha}^*_1,...,{\alpha}^*_N\right)$ and the initial point $\left( \tilde{\alpha}_{1,1},...,\tilde{\alpha}_{1,N}\right)$ by $D$, since $\tilde{\alpha}_{1,n}=1/N, \forall n=1,...,N$, we have
		\begin{equation}
		D \leq \ln{N}-H(\left(\alpha^*_1,...,\alpha^*_N\right)),
		\end{equation}
		where $H(\left(\alpha^*_1,...,\alpha^*_N\right))$ is the entropy of the best probability simplex. Since the entropy is always positive, we have $D \leq \ln{N}$. Using Exponentiated Gradient \cite{EG} algorithm with the parameter
		\begin{equation}
		\theta=\frac{2\sqrt{\ln{N}}}{R\sqrt{2AT}+R^2\sqrt{\ln{N}}},
		\end{equation}
		we achieve the regret bound in \eqref{eq:regretp}.
	\end{proof}
	\begin{remark}
	We emphasize that one can use any arbitrary density estimator in the subspaces and achieve the performance of their best convex combination using the explained adaptive combination. However, since the exponential family distribution covers a wide set of parametric distributions and closely approximates a wide range of non-parametric real life distributions, we use the density estimator in Alg. \ref{tab:table2}.
	\end{remark}

	As shown in the theorem, no matter how the set of subspaces $\mathcal{S}$ is constructed, our multi-modal density estimate in \eqref{eq:p} is competitive against the best convex combination of the density functions defined over the subspaces in $\mathcal{S}$. However, the subspaces themselves play an important role in building a proper model for arbitrary multi-modal distributions. For instance, suppose that the true underlying model is a multi-modal PDF composed of several components, which are far away from each other. If we carefully construct subspaces, such that each subspace contains only the samples generated from one of the components (or these subspaces are included in $\mathcal{S}$), then the best convex combination of the subspaces will be a good model for the true underlying PDF. This scenario is further explained through an example in Section \ref{Ex:1}. \par 
	
	In the following subsection, we introduce a decision tree approach \cite{decisiontree} to construct a growing set of proper subspaces and fit a model of the form \eqref{multimodalden} to the sample vectors. Using this tree, we start with a model with $N=1$ and increase $N$ as we observe more samples. Hence, while mitigating overfitting issues due to the $\ln{N}$ bound in \eqref{eq:regretp}, our modeling power increases with time.

	\subsection{Incremental Decision Tree}\label{IDT}

	We introduce a decision tree to partition the space of sample vectors into several subspaces. Each node of this tree corresponds to a specific subspace of the observation space. The samples inside each subspace are used to train a single component PDF. These single component probability density functions are then combined to achieve the performance of their best convex combination. \par 
	
	As explained in Section \ref{sub:mixture}, our adaptive combination of single component density functions will be competitive against their best convex combination, regardless of how we build the subspaces. However, in order to closely model arbitrary multi-modal density functions of the form \eqref{multimodalden}, we seek to find subspaces that contain only the samples from one of the components. 	Clearly, this is not always straightforward (or may not be even possible), specially if the centroids of the component densities are close to each other. To this end, we introduce an incremental decision tree \cite{decisiontree} which generates a set of subspaces so that as we observe more samples, our tree adaptively grows and produces more subspaces tuned to the underlying data. Hence, using its carefully produced subspaces, we are able to generate a multi-modal PDF that can closely model the normal data even for complex multi-modal densities, which are hard to learn with classical approaches. We next explain how we construct this incremental tree. We emphasize that we use binary trees as an example and our construction can be extended to multi branch trees in a straightforward manner.\par 
	
	\begin{figure*}[t]\centering
		\includegraphics[width=0.8\linewidth]{}
		\caption{An example structure of the binary tree introduced in Section \ref{IDT}, where the observation space is $\mathbb{R}$ represented as squares here. The regions corresponding to the nodes are colored in gray. Each node is represented by a binary index of the form $(i,j)$, where $i$ is the level of the node, and $j$ is its order among the nodes in level $i$.}
		\label{fig:structure}
	\end{figure*}
	
	We start building our binary tree with a single node corresponding to the whole space of the sample vectors. As an example, consider step 1 in Fig. \ref{fig:structure}. We say that this node is the $1^{\text{st}}$ node in level $0$, and denote this node by a binary index of $(0,1)$, where the first element is the node's level and the second element is the order of the node among its co-level nodes. We grow the tree by splitting the subspace corresponding to a specific node into two subspaces (corresponding to two new nodes) at rounds $t={\beta}^k$ for $k=1,2,...$. Hence, at each round $t$, the tree will have $\lceil  {\log_{\beta} t} \rceil$ nodes. We emphasize that, as shown in Theorem \ref{pro1}, selecting the splitting times as the powers of $\beta$, we achieve a regret bound of $O(\sqrt{T\log{\log{T}}})$ against the best convex combination of the single component PDFs (see \eqref{eq:regretp}). Moreover, this selection of splitting times leads to a logarithmic in time computational complexity. However, again, we note that our algorithm is generic so that the splitting times can be selected in any arbitrary manner. \par 
	
	To build subspaces (or sets), we use hyperplanes to avoid overfitting. In order to choose a proper splitting hyperplane, we run a sequential $2$-means algorithm \cite{kmeans} over all the nodes as detailed in Alg. \ref{tab:tabletree}. These $2$-means algorithms are also used to select the nodes to split as follows. At each splitting time, we split the node that has the maximum ratio of ``distance between $2$ centroids" to ``$2^{ \lbrace \text{level of the node} \rbrace}$'', where ``level of the node" is the number of splits required to build the node's corresponding subspace as shown in Fig. \ref{fig:structure}. Note that as this ratio increases, it's implied that the node does not include samples from a single component PDF, which makes it a good choice to split. This motivation is illustrated using a realistic example in Section \ref{Ex:1}. We split the nodes using the hyperplane, which is perpendicular to the line connecting the two centroids of the 2-means algorithm running over the node and splits this line in half. The splitted node keeps a portion of its $\tilde{\alpha}$ value for itself and splits the remaining among its children. This portion, which is a parameter of the algorithm is denoted by $\xi$. We emphasize that using the described procedure, each node may be splitted several times. Hence, if the splitting hyperplane is not proper due to lack of observations, the problem can be fixed later by splitting the node again with more accurate hyperplanes in the future rounds. As an example, consider Fig. \ref{fig:structure}. At the last step, node $(2,3)$ is splitted again with a slightly shifted splitting line. This is illustrated in more detail using an example in Section \ref{Ex:1}. The algorithm pseudo code is provided in Alg. \ref{tab:tabletree}. 
	\begin{remark}
		We use linear separation hyperplanes to avoid overtraining while the modeling power is attained by using an incremental tree. However, our method can directly used with different separation hyperplanes.
	\end{remark}

	\begin{algorithm}[t]
		\caption{IDT-based Multi-modal Density Estimator}
		\label{tab:tabletree}
		\begin{algorithmic}[1]
			\algsetup{linenosize=\tiny}
			\STATE Select parameters $\beta$ and $\xi$
			\STATE Initialize $N=1$
			\STATE Initialize $\Sigma x_{1,L}=\Sigma x_{1,R}=0$ (zero vector)
			\STATE Initialize $\xi_{1,L}=\xi_{1,R}=1$
			\STATE Run Alg. \ref{tab:table2} over node $1$.
			\FOR{$t=1$ \TO $T$}
			\STATE Declare $\tilde{p}_t(\cdot)$ as \eqref{eq:p}
			\STATE Observe $\vec{x}_t$
			\FOR{$n=1$ \TO $N$}
			\IF{$\vec{x}_t \in r_n$}
			\STATE Update $\tilde{f}_{t,n}$ using Alg. \ref{tab:table2}.
			\IF{$\|\frac{\Sigma x_{n,L}}{\xi_{n,L}}-\vec{x}_t \| \leq \|\frac{\Sigma x_{n,R}}{\xi_{n,R}}-\vec{x}_t \|$}
			\STATE $\Sigma x_{n,L}=\Sigma x_{n,L}+\vec{x}_t$
			\STATE $\xi_{n,L}=\xi_{n,L}+1$
			\ELSE
			\STATE $\Sigma x_{n,R}=\Sigma x_{n,R}+\vec{x}_t$
			\STATE $\xi_{n,R}=\xi_{n,R}+1$
			\ENDIF
			\ENDIF
			\ENDFOR
			\STATE Update $\tilde{\alpha}$ variables as \eqref{eq:alpha}
			\IF{$t=\beta^k$}
			\STATE Select the node $n$ as explained in section \ref{IDT}
			\STATE Let $L=\Sigma x_{n,L}/\xi_{n,L}$, $R=\Sigma x_{n,R}/\xi_{n,R}$
			\STATE Split the node using the hyperplane with normal vector of $a=A/\|A\|$ and $b=<A,(L+R)/2>$, where $A=L-R$. (Hyperplane: $<x,a>=b$)
			\STATE Run copies of Alg. \ref{tab:table2} over new nodes.
			\ENDIF
			\ENDFOR
		\end{algorithmic} 	
	\end{algorithm} 
	As detailed in Alg. \ref{tab:tabletree}, at each round $t$, the tree nodes declare their single component PDFs, i.e., $\tilde{f}_{t,i}(\cdot), i=1,...,N$. We combine these density functions using \eqref{eq:p} to produce our multi-modal density estimate $\tilde{p}_t(\cdot)$. Then, the new sample vector $\vec{x}_t$ is observed and we suffer our loss as \eqref{eq:logloss}. Subsequently, we update the combination variables, i.e., $\tilde{\alpha}_{t,i}, i=1,...,N$, using \eqref{eq:alpha}. The centroids of the 2-means algorithms running over nodes are also updated as detailed in Alg. \ref{tab:tabletree}. Finally, the single component density estimates at the nodes are updated as detailed in Alg. \ref{tab:table2}. At the end of the round, if $t=\beta^k$, we update the tree structure and construct new nodes as explained in Section \ref{IDT}.\par

	In the following section, we explain our adaptive thresholding scheme, which will be used on top of described multi-modal density estimator to form our two-step anomaly detection algorithm.
	
	\section{Anomaly Detection Using Adaptive Thresholding}\label{anomalydetection}
	We construct an algorithm, which thresholds the estimated density function $\hat{p}_t(\vec{x}_t)$ to label the sample vectors. To this end, we label the sample $\vec{x}_t$ by comparing $\hat{p}_t(\vec{x}_t)$ with a threshold $\tau_t$ as
	\begin{equation}\label{eq:anomalydec}
	\hat{d}_t=
	\begin{cases}
	+1, &\hat{p}_t(\vec{x}_t)<\tau_t\\
	-1, &\hat{p}_t(\vec{x}_t)\geq\tau_t.
	\end{cases}
	\end{equation}
	Suppose at some specific rounds $t \in T_f$, after we declared our decision $\hat{d}_t$ the true label $d_t$ is revealed. We seek to use this information to minimize the total regret defined in \eqref{eq:anomalyregret}. However, since we observe the incurred loss only at rounds $t \in T_f$, we restrict ourselves to these rounds. Moreover, since the loss function used in \eqref{eq:anomalyregret} is based on the indicator function that is not differentiable, we substitute the loss function defined in \eqref{eq:anomalyloss} with the well known logistic loss function defined as
	\begin{equation}\label{eq:tildeloss}
	\tilde{l}(\tau_t,\hat{p}_t(\vec{x}_t),d_{t})=C_{d_t}\log(\exp((\hat{p}_t(\vec{x}_t)-\tau_t)d_t)+1).
	\end{equation}
	Our aim is to achieve the performance of the best constant $\tau$ in a convex feasible set $\mathbf{G}$. To this end, we define our regret as
	\begin{equation}\label{eq:pseudoreg}
	\tilde{R}_{T_f}=\sum_{t\in T_f}^{} \tilde{l}(\tau_t,\hat{p}_t(\vec{x}_t),d_{t})-\min\limits_{\tau \in \mathbf{G}}\sum_{t\in T_f}^{}\tilde{l}(\tau,\hat{p}_t(\vec{x}_t),d_{t}),
	\end{equation}
	We use the Online Gradient Descent algorithm \cite{Hazan2007} to produce our threshold level $\tau_t$. To this end, we choose ${\tau}_1 \in \mathbf{G}$ arbitrarily. At each round $t$, after declaring our decision $\hat{d}_t$, we construct
	\begin{equation}\label{eq:threshold}
	\tau_{t+1}=
	\begin{cases}
	\mathbb{P}_{\mathbf{G}}\left(\tau_{t}- \alpha_t \nabla_\tau \tilde{l}(\tau_t,\hat{p}_t(\vec{x}_t),d_{t})\right), &\text{if $d_t$ is known} \\
	\tau_{t}, &\text{otherwise},
	\end{cases}
	\end{equation}
	where $\alpha_t$ is the step size at time $t$ and $\mathbb{P}_{\mathbf{G}}(\cdot)$ is a projection function defined as
	\begin{equation}\label{eq:proj}
	\mathbb{P}_{\mathbf{G}}(a)=\argmin_{b \in \mathbf{G}} \| b-a \|.
	\end{equation}
	The complete algorithm is provided in Alg. \ref{tab:table_anomaly}. \par 
	
	For the sake of notational simplicity, from now on, we assume that $d_t$ is revealed at all time steps. We emphasize that since the rounds with no feedback do not affect neither the threshold in \eqref{eq:threshold}, nor the regret in \eqref{eq:pseudoreg}, we can simply ignore them in our analysis. The following theorem shows that using Alg. \ref{tab:table_anomaly}, we achieve a regret upper bound of $O(\log T)$, against the best fixed threshold level selected in hindsight.
		\begin{algorithm}[t]
		\caption{IDT-based Anomaly Detector}
		\label{tab:table_anomaly}
		\begin{algorithmic}[1]
			\algsetup{linenosize=\tiny}
			\STATE Select parameters $C_1$ and $C_{-1}$
			\STATE Fix $\alpha_t$ using \eqref{eq:stepsize} for $t=1,...,T$
			\STATE Select $\tau_1 \in \mathbf{G}$ arbitrarily
			\FOR {$t=1$ \TO $T$ }
			\STATE Observe $\hat{p}_t(\vec{x}_t)$
			\STATE Calculate $\hat{d}_t$ using \eqref{eq:anomalydec}
			\STATE Observe $d_t$
			\STATE Suffer the loss $\tilde{l}(\hat{\eta}_{t},\vec{x}_t)$ according to \eqref{eq:tildeloss}
			\STATE Calculate $\tau_{t+1}=\mathbb{P}_{\mathbf{G}}\left(\tau_{t}+ \frac{\alpha_t d_t C_{d_t}}{1+\exp((\tau_t-\hat{p}_t(\vec{x}_t))d_t)} \right)$
			\ENDFOR
		\end{algorithmic} 	
	\end{algorithm} 
	
	\begin{theorem}
		Using Alg. \ref{tab:table_anomaly} with step size
		\begin{equation}\label{eq:stepsize}
		\alpha_t=\frac{(1+\exp(\mathcal{D}_\mathbf{G}))^2}{t C_{\min} \exp (\mathcal{D}_\mathbf{G})},
		\end{equation}
		our anomaly detection regret in \eqref{eq:pseudoreg} is upper bounded as
		\begin{equation}\label{eq:anregb}
		\tilde{R}_{T}\leq \frac{ \exp(\mathcal{D}_\mathbf{G}) C^2_{\max} }{2 C_{\min}} (1+\log T),
		\end{equation} 
		where $\mathcal{D}_\mathbf{G}=\max\limits_{a,b \in \mathbf{G}} \| a-b \|$ is the diameter of the feasible set $\mathbf{G}$ including $\tau_t$ and $\hat{p}(\vec{x}_t)$. $C_{\max}$ and $C_{\min}$ are the maximum and minimum of $\lbrace C_{1},C_{-1} \rbrace$, respectively.
	\end{theorem}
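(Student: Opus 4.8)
The plan is to recognize this as an instance of online convex optimization with strongly convex losses, for which the $O(\log T)$ regret of Online Gradient Descent is established in \cite{Hazan2007}. The core of the argument is therefore to verify that each per-round loss $\tilde{l}_t(\tau)\defi\tilde{l}(\tau,\hat{p}_t(\vec{x}_t),d_t)$ from \eqref{eq:tildeloss}, viewed as a function of the scalar $\tau$ over the convex set $\mathbf{G}$, is both strongly convex and has bounded gradient, and then to track the exact constants so that they reproduce \eqref{eq:anregb}.

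First I would compute the derivatives. Writing $u=(\hat{p}_t(\vec{x}_t)-\tau)d_t$ and using $d_t^2=1$, a direct calculation gives $\tilde{l}_t'(\tau)=-C_{d_t}d_t\,\sigma(u)$ and $\tilde{l}_t''(\tau)=C_{d_t}\,\sigma(u)(1-\sigma(u))$, where $\sigma(\cdot)$ denotes the logistic function. Since both $\tau$ and $\hat{p}_t(\vec{x}_t)$ lie in $\mathbf{G}$, we have $|u|\leq\mathcal{D}_\mathbf{G}$, and because $\sigma(u)(1-\sigma(u))=e^{u}/(1+e^{u})^2$ is even and decreasing in $|u|$, it attains its minimum over $[-\mathcal{D}_\mathbf{G},\mathcal{D}_\mathbf{G}]$ at $|u|=\mathcal{D}_\mathbf{G}$. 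This yields the strong-convexity constant $H=C_{\min}\,e^{\mathcal{D}_\mathbf{G}}/(1+e^{\mathcal{D}_\mathbf{G}})^2$. Similarly, $|\tilde{l}_t'(\tau)|=C_{d_t}\,\sigma(u)\leq C_{\max}\,\sigma(\mathcal{D}_\mathbf{G})=C_{\max}\,e^{\mathcal{D}_\mathbf{G}}/(1+e^{\mathcal{D}_\mathbf{G}})$, giving the gradient bound $G=C_{\max}\,e^{\mathcal{D}_\mathbf{G}}/(1+e^{\mathcal{D}_\mathbf{G}})$.

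Next I would run the standard OGD recursion. Let $\tau^*$ be the minimizer in \eqref{eq:pseudoreg}, abbreviate $g_t=\tilde{l}_t'(\tau_t)$, and use non-expansiveness of the Euclidean projection $\mathbb{P}_\mathbf{G}$ to write $(\tau_{t+1}-\tau^*)^2\leq(\tau_t-\alpha_t g_t-\tau^*)^2$. Expanding and rearranging gives $g_t(\tau_t-\tau^*)\leq\frac{1}{2\alpha_t}\big[(\tau_t-\tau^*)^2-(\tau_{t+1}-\tau^*)^2\big]+\frac{\alpha_t}{2}g_t^2$. Combining this with the strong-convexity inequality $\tilde{l}_t(\tau_t)-\tilde{l}_t(\tau^*)\leq g_t(\tau_t-\tau^*)-\frac{H}{2}(\tau_t-\tau^*)^2$ and summing over $t=1,\dots,T$, I would set $\alpha_t=1/(Ht)$, which is exactly \eqref{eq:stepsize} once $H$ is substituted. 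With this choice the coefficients of the $(\tau_t-\tau^*)^2$ terms telescope to zero, since $\frac{1}{2\alpha_t}-\frac{1}{2\alpha_{t-1}}=\frac{H}{2}$ cancels the $-\frac{H}{2}$ supplied by strong convexity, leaving only the nonpositive boundary term $-\frac{1}{2\alpha_T}(\tau_{T+1}-\tau^*)^2$. Hence $\tilde{R}_T\leq\frac{G^2}{2H}\sum_{t=1}^T\frac{1}{t}\leq\frac{G^2}{2H}(1+\log T)$, and substituting $G$ and $H$ collapses the $(1+e^{\mathcal{D}_\mathbf{G}})$ factors to give precisely $\frac{e^{\mathcal{D}_\mathbf{G}}C_{\max}^2}{2C_{\min}}(1+\log T)$.

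The main obstacle is not the recursion, which is routine, but pinning down the two constants with the correct dependence on $\mathcal{D}_\mathbf{G}$: one must use the diameter-dependent gradient bound $G=C_{\max}e^{\mathcal{D}_\mathbf{G}}/(1+e^{\mathcal{D}_\mathbf{G}})$ rather than the crude bound $C_{\max}$, because it is exactly the cancellation of the $(1+e^{\mathcal{D}_\mathbf{G}})^2$ factor between $G^2$ and $1/H$ that produces the clean constant $e^{\mathcal{D}_\mathbf{G}}C_{\max}^2/(2C_{\min})$. I would also take care to invoke the stipulation that $\mathbf{G}$ contains both $\tau_t$ and $\hat{p}_t(\vec{x}_t)$, so that the bound $|u|\leq\mathcal{D}_\mathbf{G}$ holds uniformly across all rounds and both the strong-convexity and gradient estimates are legitimate.
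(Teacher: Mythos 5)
Your proposal is correct and follows essentially the same route as the paper: compute the first and second derivatives of the logistic loss, bound them over $\mathbf{G}$ to obtain the gradient bound $G=C_{\max}/(1+\exp(-\mathcal{D}_\mathbf{G}))$ (identical to your $C_{\max}e^{\mathcal{D}_\mathbf{G}}/(1+e^{\mathcal{D}_\mathbf{G}})$) and the strong-convexity constant $H=C_{\min}e^{\mathcal{D}_\mathbf{G}}/(1+e^{\mathcal{D}_\mathbf{G}})^2$, and then invoke the $O(\log T)$ regret of Online Gradient Descent from \cite{Hazan2007} with $\alpha_t=1/(Ht)$. The only difference is that you unroll the projection/telescoping argument behind that citation explicitly, which the paper leaves as a black box; your constants and final bound match \eqref{eq:anregb} exactly.
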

	
	\begin{proof} 
		Considering the loss function in \eqref{eq:tildeloss}, we take the first derivatives of $\tilde{l}$ as
		\begin{equation}
		\frac{\partial \tilde{l}(\tau_t,\hat{p}_t(\vec{x}_t),d_t)}{\partial \tau_t}=\frac{-d_t C_{d_t}}{1+\exp(  (\tau_t-\hat{p}_t(\vec{x}_t)) d_t )},
		\end{equation}
		and its second derivative as
		\begin{equation}
		\frac{\partial^2 \tilde{l}(\tau_t,\hat{p}_t(\vec{x}_t),d_t)}{\partial \tau^2_t}=\frac{ C_{d_t}\exp(  (\tau_t-\hat{p}_t(\vec{x}_t)) d_t ) }{ (1+\exp(  (\tau_t-\hat{p}_t(\vec{x}_t)) d_t ))^2 }.
		\end{equation}
		The first derivative can be bounded as
		\begin{equation}\label{eq:g}
		|\frac{\partial \tilde{l}(\tau_t,\hat{p}_t(\vec{x}_t),d_t)}{\partial \tau_t}| \leq \frac{C_{\max}}{1+\exp(-\mathcal{D}_\mathbf{G})}.
		\end{equation}
		Similarly, the second derivative is bounded as
		\begin{equation}\label{eq:h}
		|\frac{\partial^2 \tilde{l}(\tau_t,\hat{p}_t(\vec{x}_t),d_t)}{\partial \tau^2_t}| \geq \frac{C_{\min} \exp(\mathcal{D}_\mathbf{G})}{(1+\exp(\mathcal{D}_\mathbf{G}))^2}.
		\end{equation}
		Using Online Gradient Descent \cite{Hazan2007}, with step size given in \eqref{eq:stepsize} we achieve the regret upper bound in \eqref{eq:anregb}.
	\end{proof}\par

	\section{Experiments}\label{simulations}
	In this section, we demonstrate the performance of our algorithm in different scenarios involving both real and synthetic data. In the first experiment, we have created a synthetic scenario to illustrate how our algorithm works. In this scenario, we sequentially observe samples drawn from a $4$-component distribution, where the probability density function is a convex combination of $4$ multivariate Gaussian distributions. The samples generated from one of the components are considered anomalous. The objective is to detect these anomalous samples. In the second experiment, we have shown the superior performance of our algorithm with respect to the state-of-the-art methods on a synthetic dataset, where the underlying PDF cannot be modeled as a multi-modal Gaussian distribution. The third experiment shows the performance of the algorithms on a real multi-class dataset. In this experiment, the objective is to detect the samples belonging to one specific class, which are considered anomalous. \par
	 
	We compare the density estimation performance of our algorithm \textit{ITAN}, against a set of state-of-the-art opponents composed of \textit{wGMM} \cite{silverman}, \textit{wKDE} \cite{silverman},  and  \textit{ML} algorithms. The \textit{wGMM} \cite{wgmm} is an algorithm which uses a sliding window of the last $\log{t}$ normal samples to train a GMM using the well known Expectation-Maximization (EM) \cite{wgmm} method. The length of sliding window is set to $\log{t}$ in order to have a fair comparison against our algorithm in the sense of computational complexity. In favor of the \textit{wGMM} algorithm, we provide to it the number of components that provides the best performance for that algorithm. The \textit{wKDE} is the well-known KDE \cite{silverman} algorithm that uses a sliding window of the last $\sqrt{t}$ normal samples to produce its estimate on the density function. The length of sliding window is $\sqrt{t}$ in favor of the \textit{wKDE} algorithm to produce competitive results. The kernel bandwidth parameters are chosen based on Silverman's rule \cite{silverman}. Finally, \textit{ML} algorithm is the basic Maximum Likelihood algorithm which fits the best single-component density function to the normal samples. We use our algorithm \textit{ITAN} with the parameters $\beta=2$ and $\xi=0.8$ in the all three experiments. We emphasize that no optimization has been performed to tune these parameters to the datasets. \par 
	In order to compare the anomaly detection performance of the algorithms, we use the same thresholding scheme described in Alg. \ref{tab:table_anomaly} for all algorithms. We use the ROC curve as our performance metric. Given a pair of false negative and false positive costs, denoted by $C_1$ and $C_{-1}$, respectively, each algorithm achieves a pair of True Positive Rate (TPR) and False Positive Rate (FPR), which determines a single point on its corresponding ROC curve. In order to plot the ROC curves, we have repeated the experiments $100$ times, where $C_1=1$ and $C_2$ is selected from the set of $\lbrace {\frac{i}{100}}| i=0,1,...,99\rbrace$. The ROC curves are plotted using the resulting $100$ samples. The Area Under Curve (AUC) of the algorithms are also calculated using these samples as another performance metric.

	\subsection{Synthetic Multi-modal Distribution}\label{Ex:1}

	In the first experiment, we have created $10$ datasets of length $1000$ and compared the performance of the algorithms in both density estimation and anomaly detection tasks. Each sample is labeled as ``normal" or ``anomalous" with probabilities of $0.9$ and $0.1$, respectively. The normal samples are randomly generated using the density function

	\begin{align}
	f_{\text{normal}}(\vec{x}_t)=\frac{1}{3} \Bigg( 
	&N\left( 
	\left[ {\begin{array}{cc}
		-1 \\
		1 \\
		\end{array} }\right] ,
	\left[ {\begin{array}{cc}
		0.2 & 0 \\
		0 & 0.2 \\
		\end{array} } \right] \right)\nonumber\\
	+
	&N\left( 
	\left[ {\begin{array}{cc}
		1 \\
		-1 \\
		\end{array} }\right] ,
	\left[ {\begin{array}{cc}
		0.14 & 0.2 \\
		0.2 & 0.4 \\
		\end{array} } \right] \right)\nonumber\\
	+
	&N\left( 
	\left[ {\begin{array}{cc}
		2 \\
		2 \\
		\end{array} }\right] ,
	\left[ {\begin{array}{cc}
		0.4 & -0.2 \\
		-0.2 & 0.14 \\
		\end{array} } \right] \right) \Bigg),\label{eq:normal}
	\end{align}
	while the anomalous samples are generated using
	\begin{equation}
	f_{\text{anomaly}}(\vec{x}_t)= 
	N\left( 
	\left[ {\begin{array}{cc}
		1 \\
		1 \\
		\end{array} }\right] ,
	\left[ {\begin{array}{cc}
		0.1 & 0 \\
		0 & 0.1 \\
		\end{array} } \right] \right).\label{eq:anomalous}
	\end{equation}
	Fig. \ref{fig:dataset1} shows the samples in one of the datasets used in this experiment to provide a clear visualization.\par
	
	\begin{figure}[t]\centering
		\includegraphics[width=0.7\linewidth]{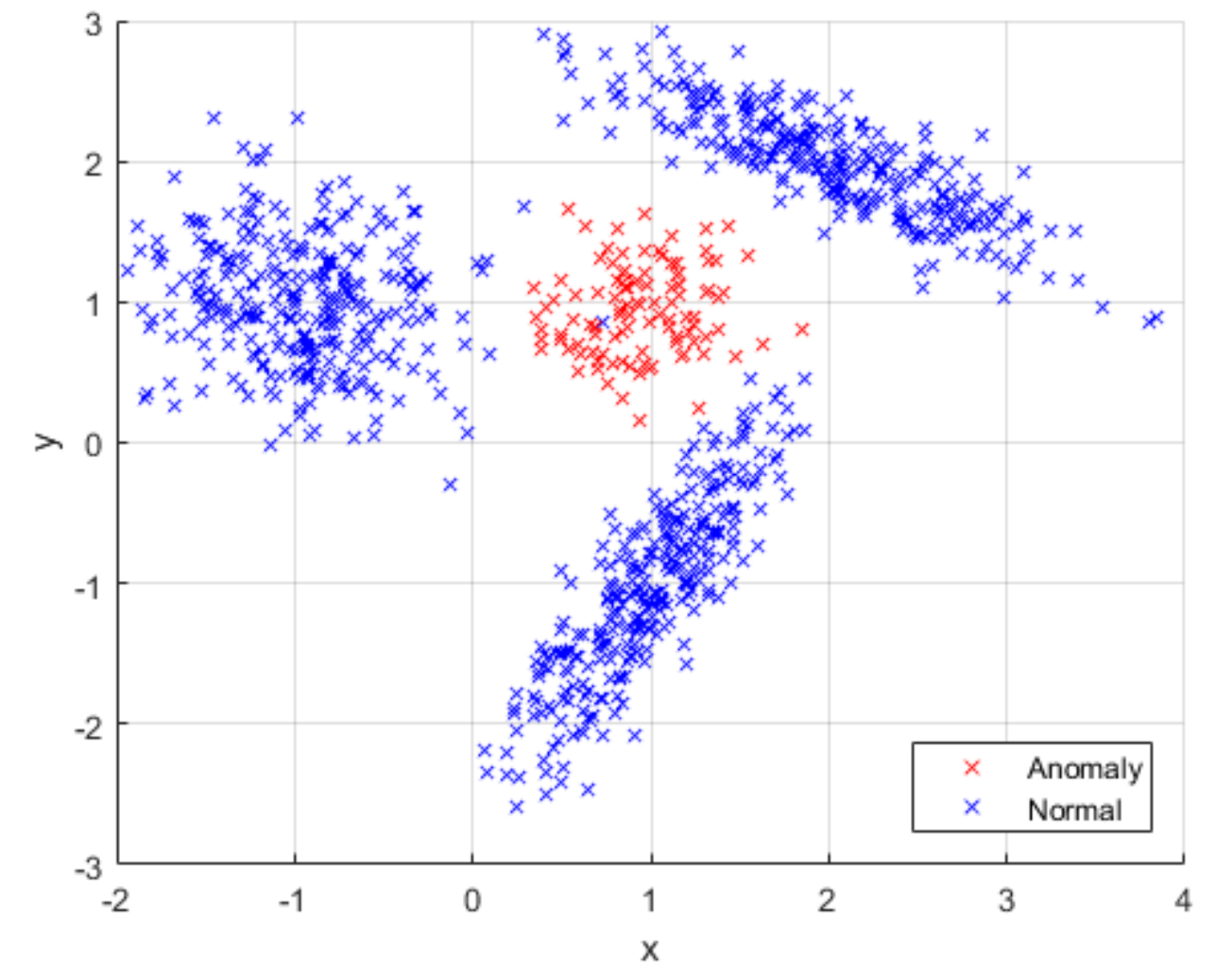}
		\caption{Visualization of samples in one of the datasets used in Experiment \ref{Ex:1}.}
		\label{fig:dataset1}
	\end{figure}

	\begin{figure*}[t]
		\centering
		\begin{adjustbox}{minipage=\linewidth,scale=0.8}
		\begin{subfigure}{0.3\textwidth}
			\includegraphics[width=\linewidth]{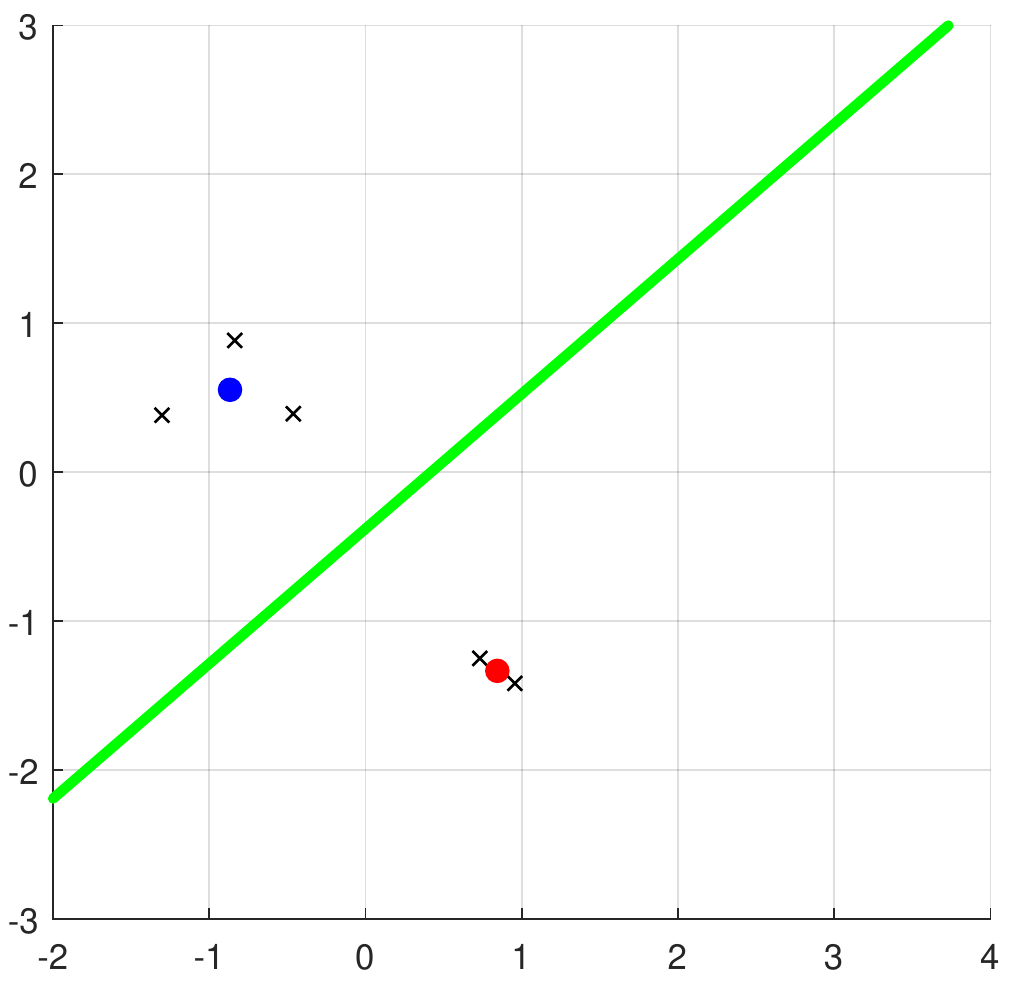}
			\caption{The samples observed until round $5$ and the first split based on these observations.} \label{fig:ex1split1}
		\end{subfigure}
		\hspace*{\fill} 
		\begin{subfigure}{0.3\textwidth}
			\includegraphics[width=\linewidth]{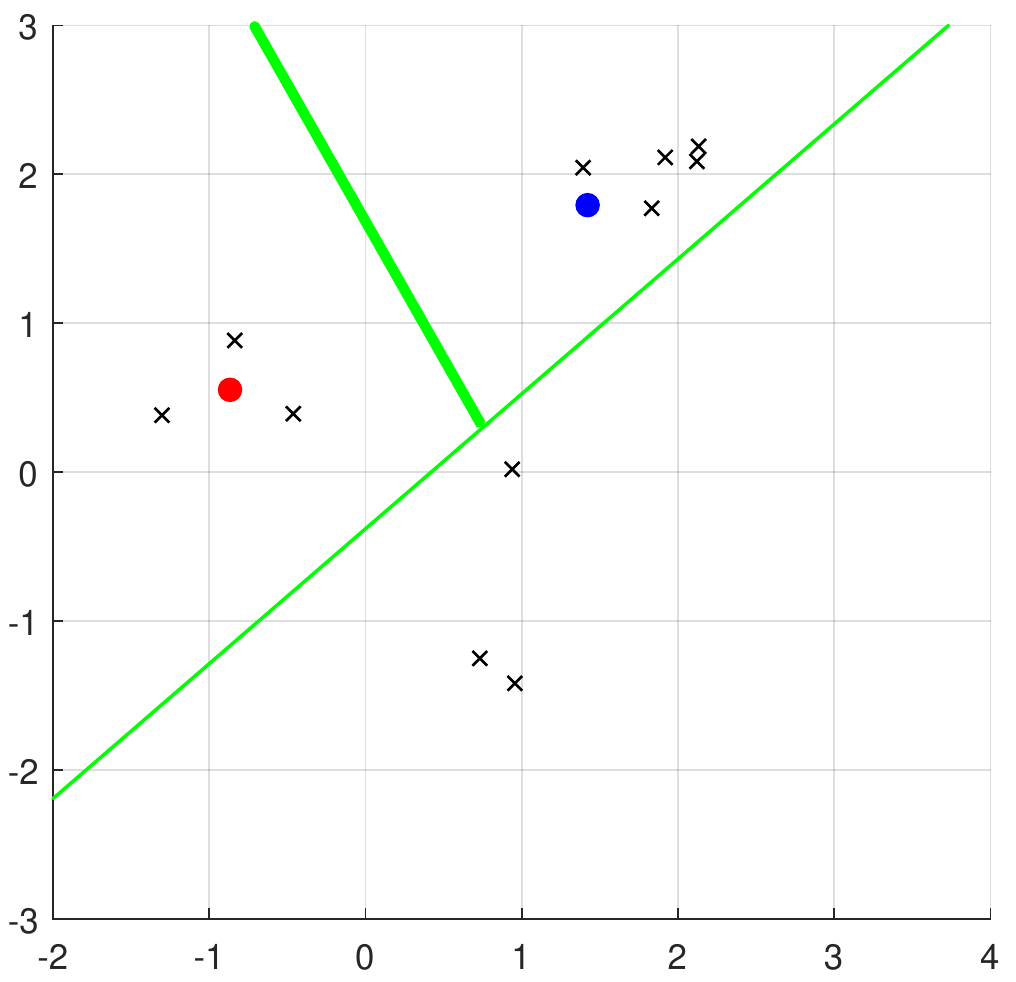}
			\caption{The samples observed until round $11$ and the second split based on these observations.} \label{fig:ex1split2}
		\end{subfigure}
		\hspace*{\fill} 
		\begin{subfigure}{0.3\textwidth}
			\includegraphics[width=\linewidth]{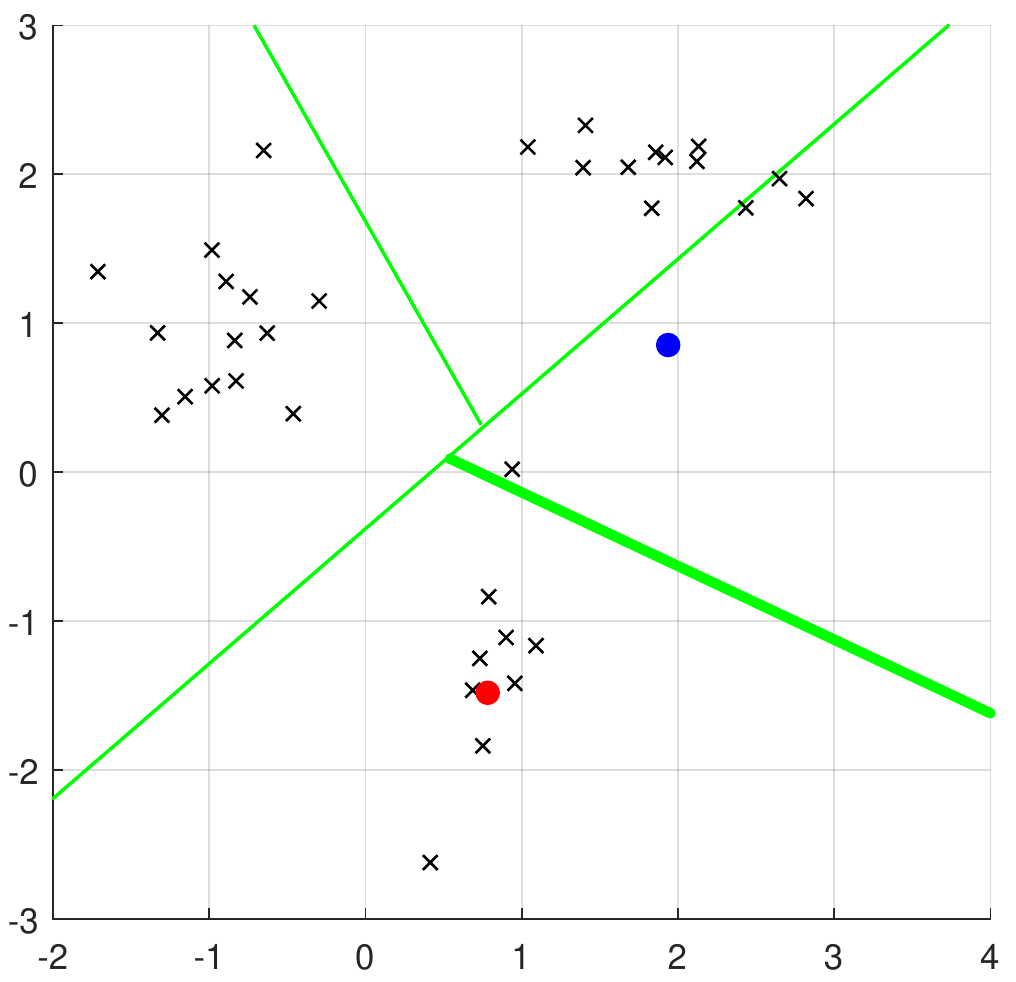}
			\caption{The samples observed until round $40$ and the third split based on these observations.}
			\label{fig:ex1split3}
		\end{subfigure}
		\hspace*{\fill} 
		\begin{subfigure}{0.3\textwidth}
			\includegraphics[width=\linewidth]{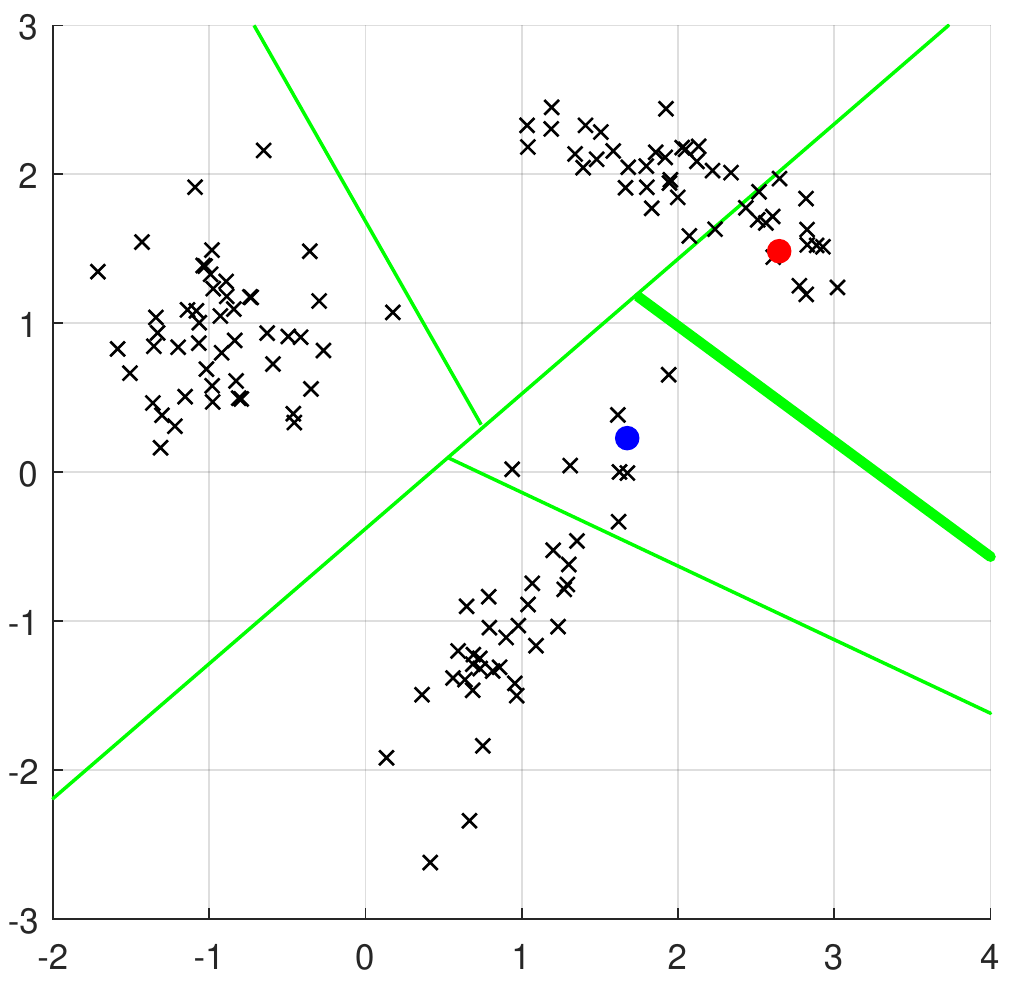}
			\caption{The samples observed until round $150$ and the fourth split based on these observations.} \label{fig:ex1split4}
		\end{subfigure}
		\hspace*{\fill} 
		\begin{subfigure}{0.3\textwidth}
			\includegraphics[width=\linewidth]{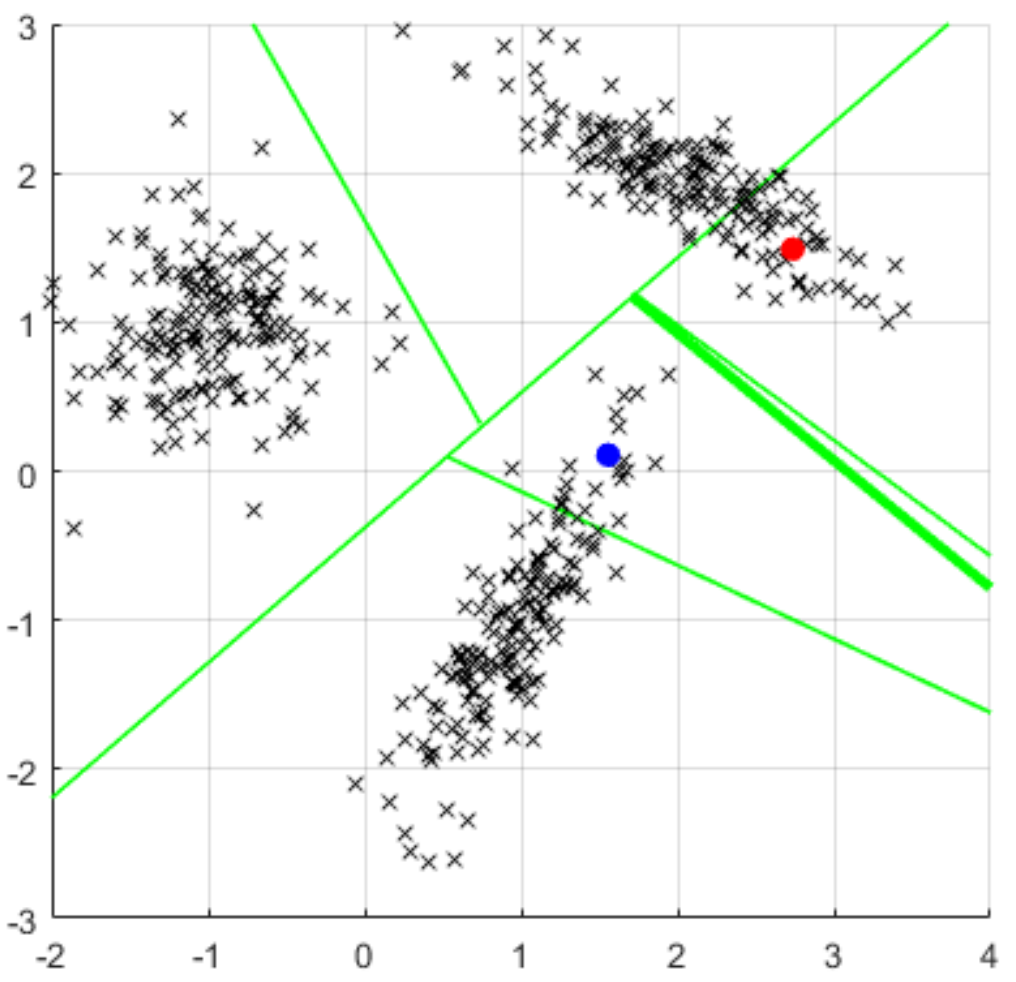}
			\caption{The samples observed until round $570$ and the fifth split based on these observations.} \label{fig:ex1split5}
		\end{subfigure}
		\hspace*{\fill} 
	\end{adjustbox}	
		\caption{An example on how the tree learns the underlying distribution of the samples. The normal samples are from a synthetic dataset generated using \eqref{eq:normal}.} 
		\label{fig:splitting}
		
	\end{figure*}
	
	In order to show how our algorithm learns, we illustrate how the tree splits the observation space, how the density estimations train their single component PDFs and how the combination of single component PDFs models the normal data in the experiment over one of the $10$ datasets. Fig. \ref{fig:splitting} shows five growth steps of the tree. In each subfigure, the observed samples are shown using black cross signs. The centroids of the 2-means algorithm running over the node that is going to split are shown using two blue and red points. The thicker green line is the new splitting line, while the thiner green lines show previous splitting lines. The splittings shown in this figure result in a tree structure that is shown in Fig. \ref{fig:structure}. Fig. \ref{fig:combination} shows how the single component PDFs defined over nodes are combined to construct our multi-modal density function. In Fig. \ref{fig:truem} the contour plot of the normal data distribution function is shown. Fig. \ref{fig:ex1single} shows the structure of the tree at the end of the experiment, the contour plots of the single component PDFs learned over the nodes, and their coefficient in the convex combination which yields the final multi-modal density function. The contour plot of this final multi-modal PDF is shown in Fig. \ref{fig:ex1multi}. As shown in these figures, the three components of the underlying PDF are almost captured by the three nodes, which are generated in the second level of our tree.

	\begin{figure*}[t]
		\centering
		\begin{adjustbox}{minipage=\linewidth,scale=1}
		\begin{subfigure}{0.28\textwidth}
			\begin{subfigure}{1\textwidth}
				\includegraphics[width=\linewidth]{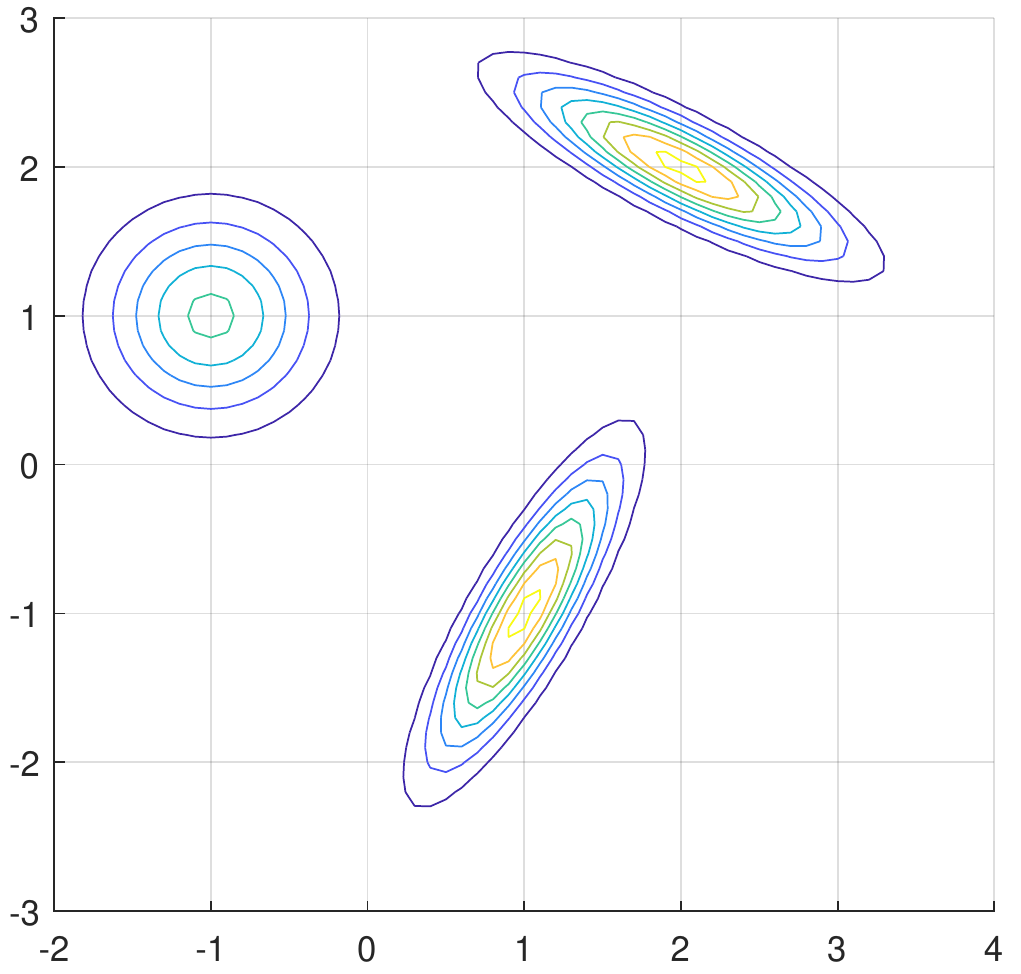}
				\caption{The contour plot of the normal data distribution defined in \eqref{eq:normal}.} \label{fig:truem}
			\end{subfigure}
			
			\begin{subfigure}{1\textwidth}\vspace{1.4cm}
				\centering
				\includegraphics[width=\textwidth]{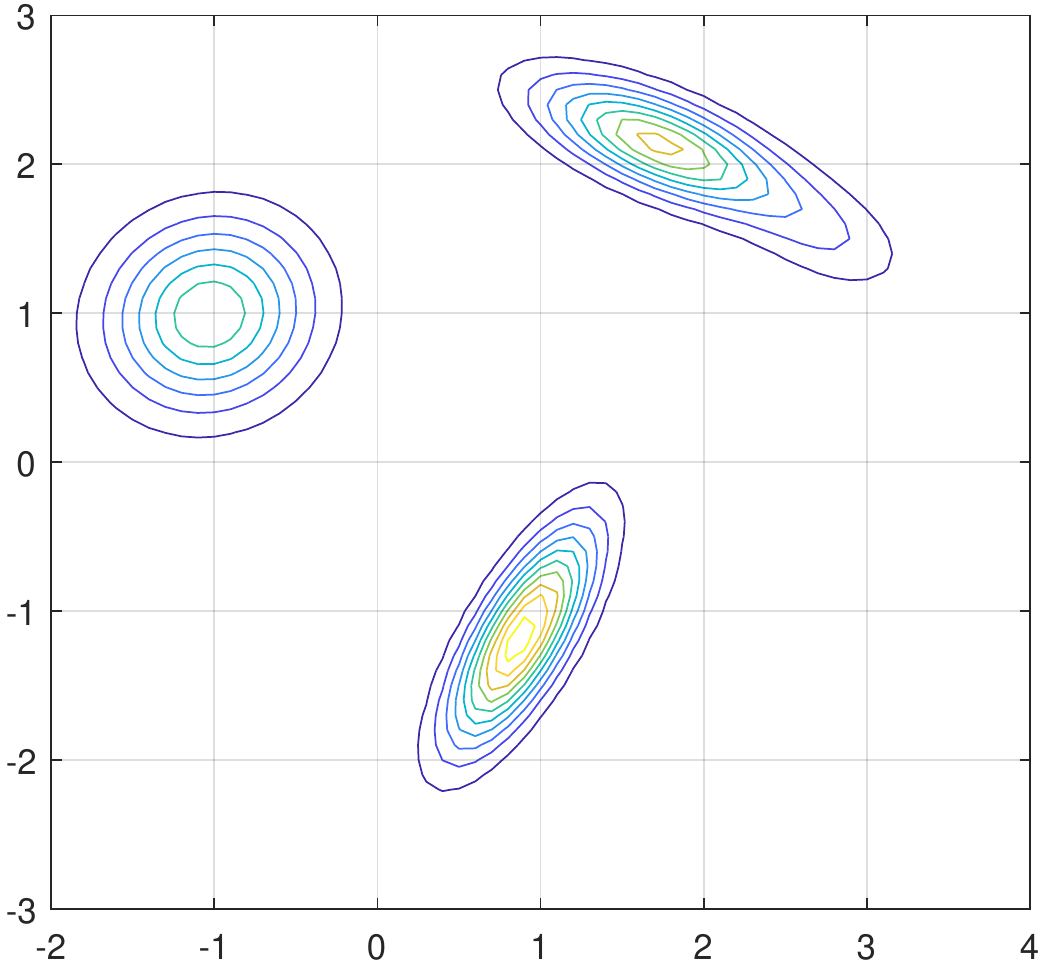}\\
				\caption{The contour plot of the normal data which is learned by the tree at the end of the experiment.}
				\label{fig:ex1multi}
			\end{subfigure}
		\end{subfigure}
		\hspace*{\fill} 
		\begin{subfigure}{0.68\textwidth}
			\includegraphics[width=\textwidth]{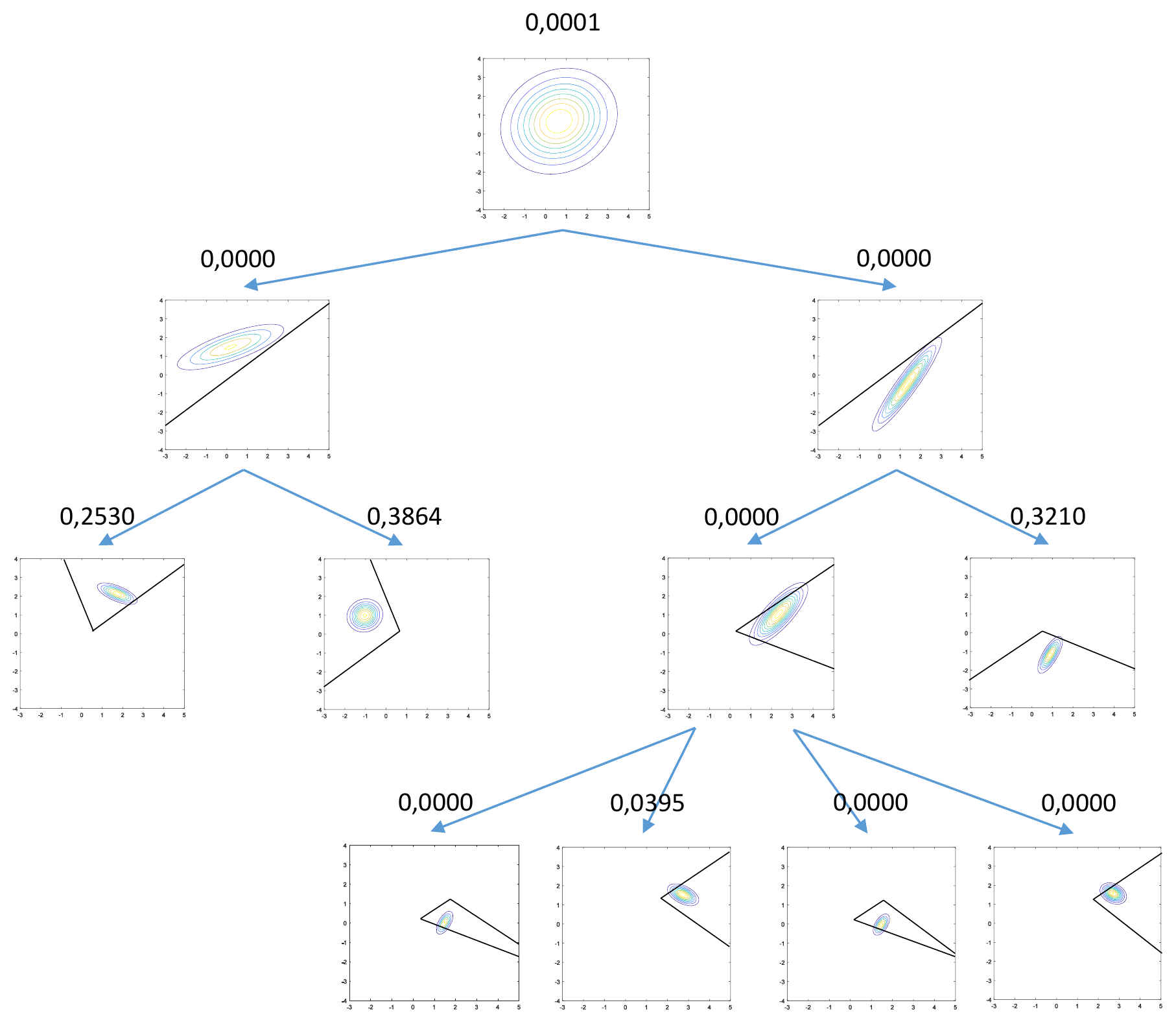}\\
			\caption{The tree structure and the contour plots of the single component density functions constructed over the nodes at the end of experiment. The coefficients of the nodes in final convex combination are shown above the nodes. The shown $11$ PDFs over nodes combined with their corresponding coefficients result in the PDF shown in \ref{fig:ex1multi}.}
			\label{fig:ex1single}
		\end{subfigure}
		\hspace*{\fill} 
		\caption{The true underlying PDF, the tree structure and the single component PDFs defined over nodes, and the final PDF learned by the algorithm at the end of the experiment on one of the datasets of the first experiment described in Section \ref{Ex:1}.} 
		\label{fig:combination}
		\end{adjustbox}
	\end{figure*}

	In order to compare the density estimation performance of the algorithms, their averaged loss per round defined by $\text{Loss}(t)=\sum_{\tau=1}^{t}l_P(\hat{p}_{\tau}(\vec{x}_{\tau}))/t$, are shown in Fig. \ref{fig:ex1pdf}. The loss of all algorithms on the rounds with anomalous observations are considered as $0$ in these plots. The anomaly performance of the algorithms are compared in Fig. \ref{fig:ex1roc}. This figure shows the ROC curves of the algorithms averaged over 10 datasets. The averaged log-loss performance, AUC results and running time of the algorithms are provided in Table. \ref{tab:ex1}. All results are obtained using a Intel(R) Core(TM) i5-4570 CPU with 3.20 GHz clock rate and 8 GB RAM.
	
	\begin{figure}\centering
		\includegraphics[width=0.7\linewidth]{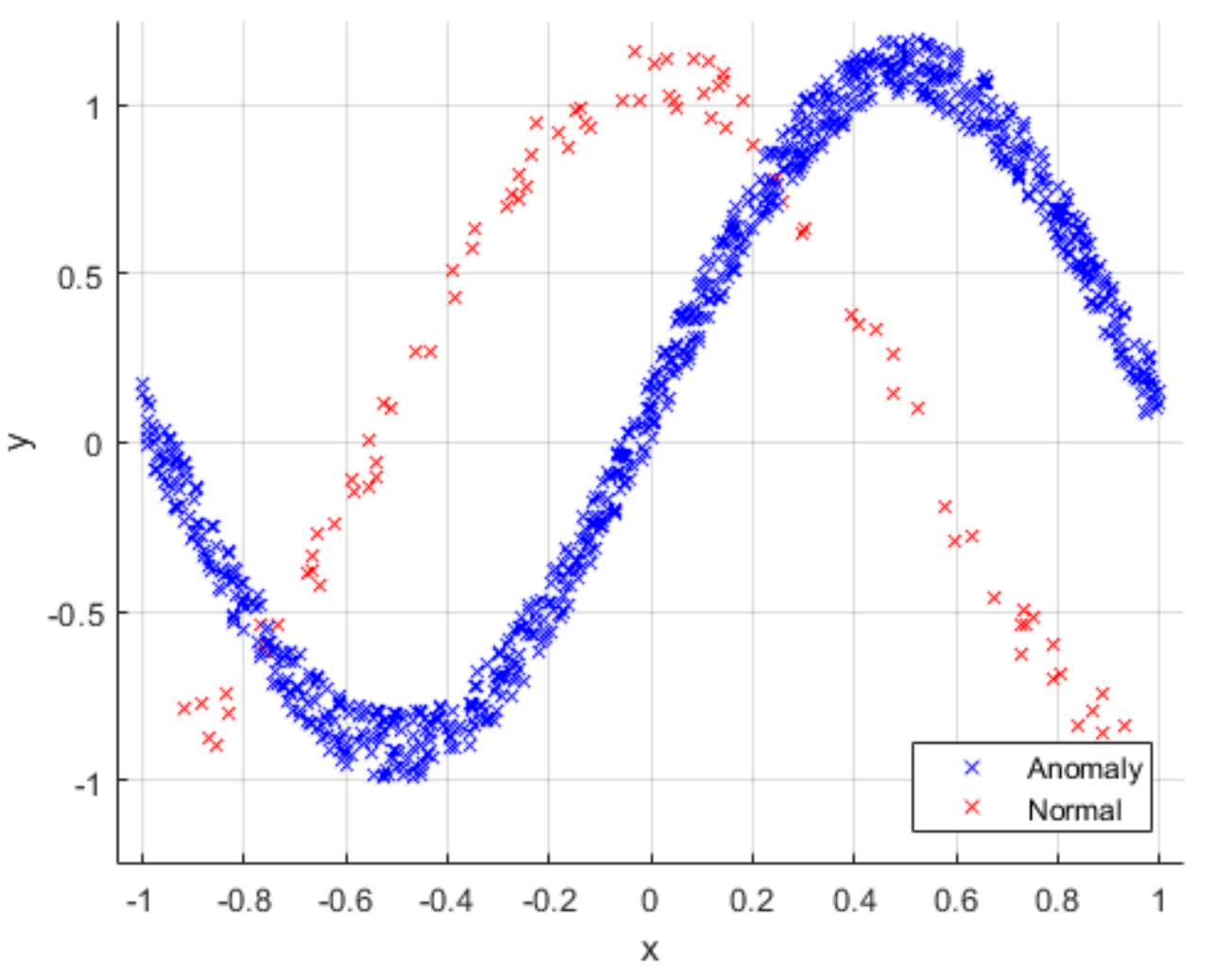}
		\caption{Visualization of samples in one of the datasets used in Experiment \ref{Ex:2}.}
		\label{fig:dataset2}
	\end{figure}
	
	\begin{table}[t]
		\centering
		\caption{``Log-Loss", ``AUC" and ``Running Time" of the algorithms over the datasets described in Section \ref{Ex:1}. The AUC and Running Time values are in the format of ``mean value $\pm$ standard deviation. }
		\label{tab:ex1}
		\begin{tabular}{ |p{1.2cm}||p{1.1cm}|p{2.2cm}|p{2.4cm}|  }
			\hline
			Algorithm& Log-Loss & Area Under Curve  & Running Time (ms)\\
			\hline
			ITAN  & $2.174$& $0.8281 \pm 0.1383$    & $313.02 \pm 5.82$ \\
			wGMM   & $3.056$&$0.8394 \pm 0.0210$ & $2275.67 \pm 123.23$\\
			wKDE & $3.022$&   $0.5678 \pm 0.0228$  & $273.16 \pm 6.92$\\
			ML & $3.087$&   $0.2700 \pm 0.0943$  & $38.56 \pm 1.33$\\
			\hline
		\end{tabular}
	\end{table}

	\begin{figure*}[t]
		\hspace*{\fill} 
		\begin{subfigure}{0.3\textwidth}
			\includegraphics[width=\linewidth]{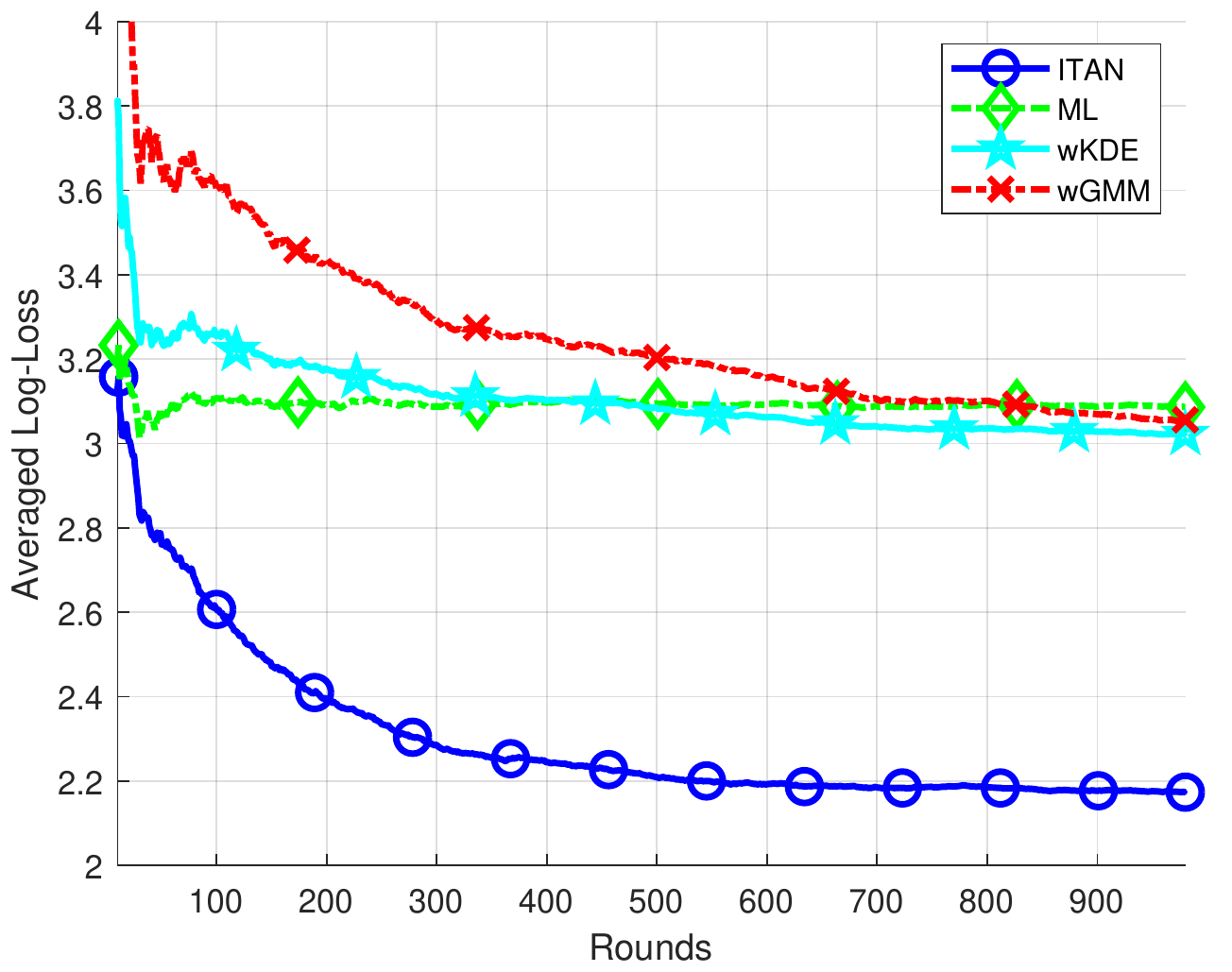}
			\caption{Density estimation loss of the algorithms over the datasets described in Section \ref{Ex:1}. The results are averaged over $10$ datasets.} \label{fig:ex1pdf}
		\end{subfigure}
		\hspace*{\fill} 
		\begin{subfigure}{0.3\textwidth}
			\includegraphics[width=\linewidth]{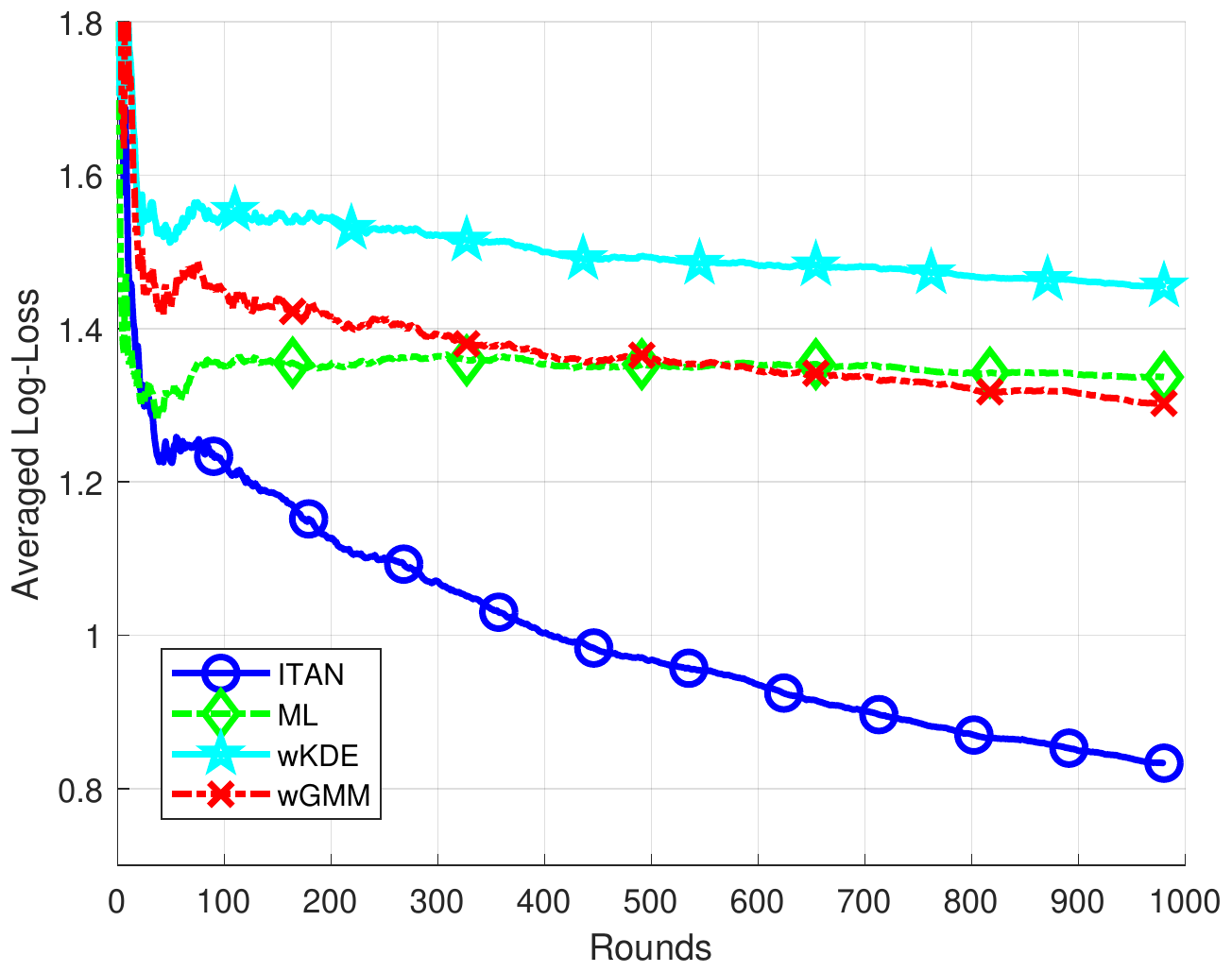}
			\caption{Density estimation loss of the algorithms over the datasets described in Section \ref{Ex:2}. The results are averaged over $10$ datasets.} \label{fig:ex2pdf}
		\end{subfigure}
		\hspace*{\fill} 
		\begin{subfigure}{0.3\textwidth}\vspace{-7mm}
			\includegraphics[width=\linewidth]{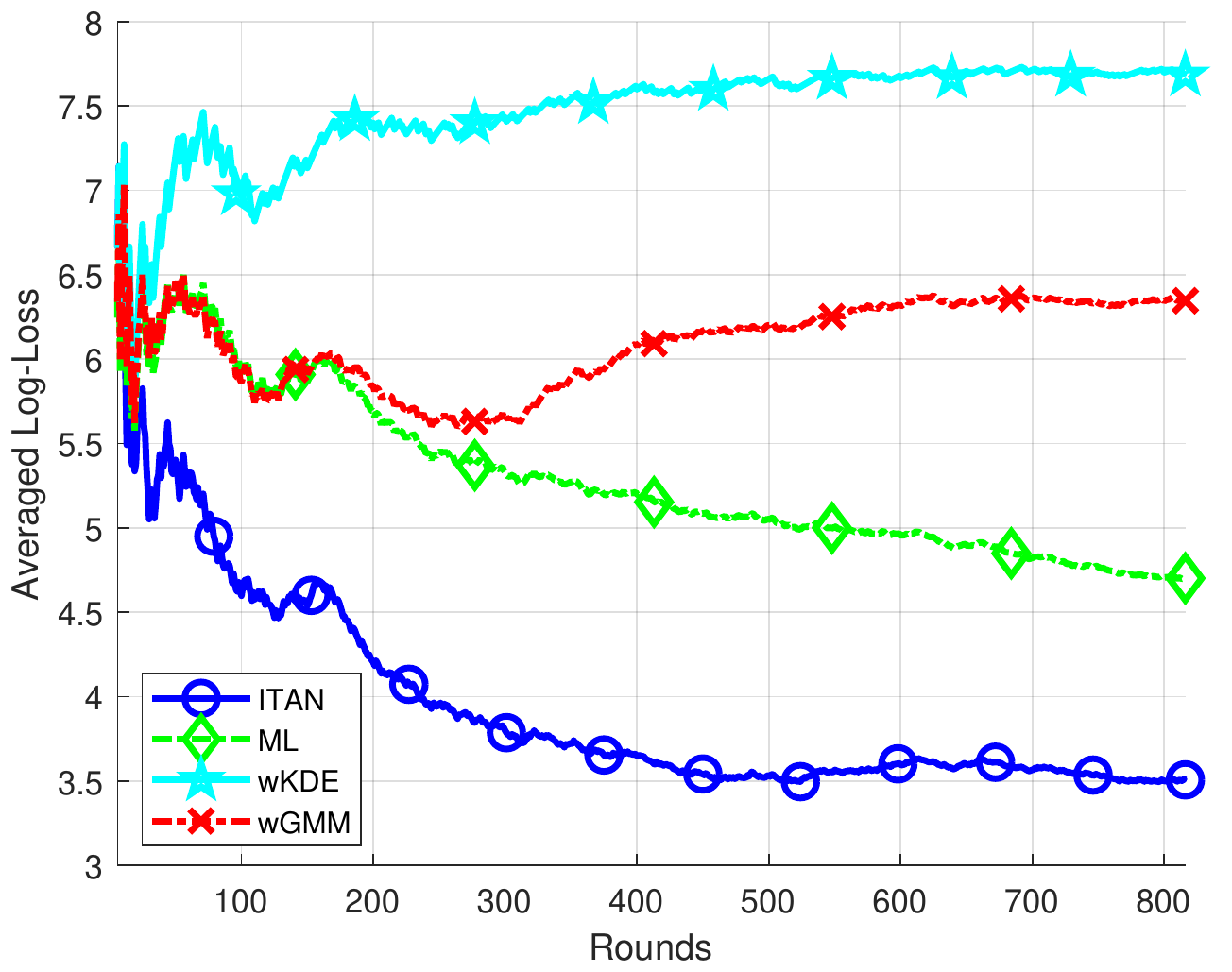}
			\caption{Density estimation loss of the algorithms over the Vehicle Silhouettes dataset.}
			\label{fig:ex3pdf}
		\end{subfigure}
		\begin{subfigure}{0.3\textwidth}
			\includegraphics[width=\linewidth]{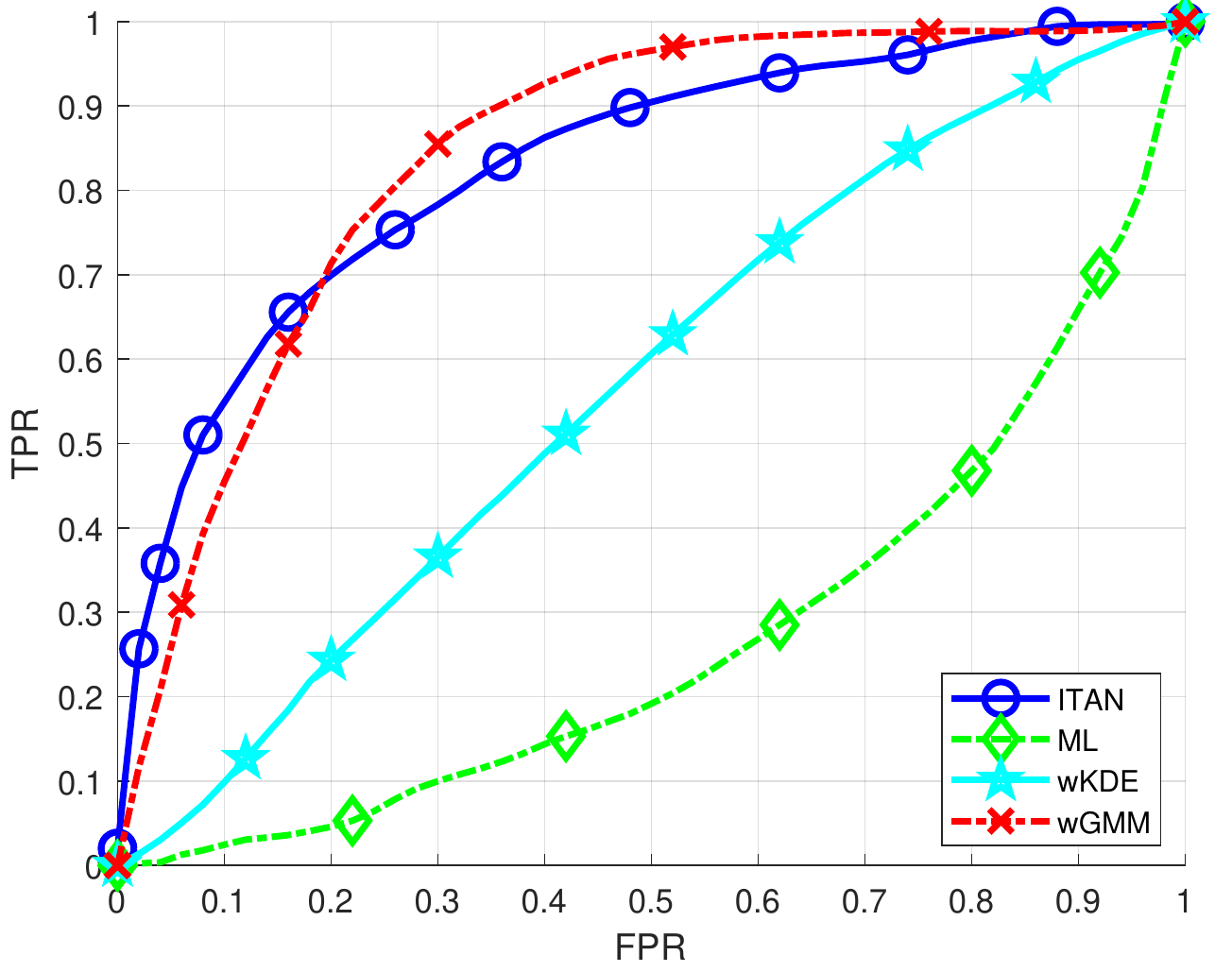}
			\caption{ROC curves of the algorithms over the datasets described in Section \ref{Ex:1}. The results are averaged over $10$ datasets.} \label{fig:ex1roc}
		\end{subfigure}
		\hspace*{\fill} 
		\begin{subfigure}{0.3\textwidth}
			\includegraphics[width=\linewidth]{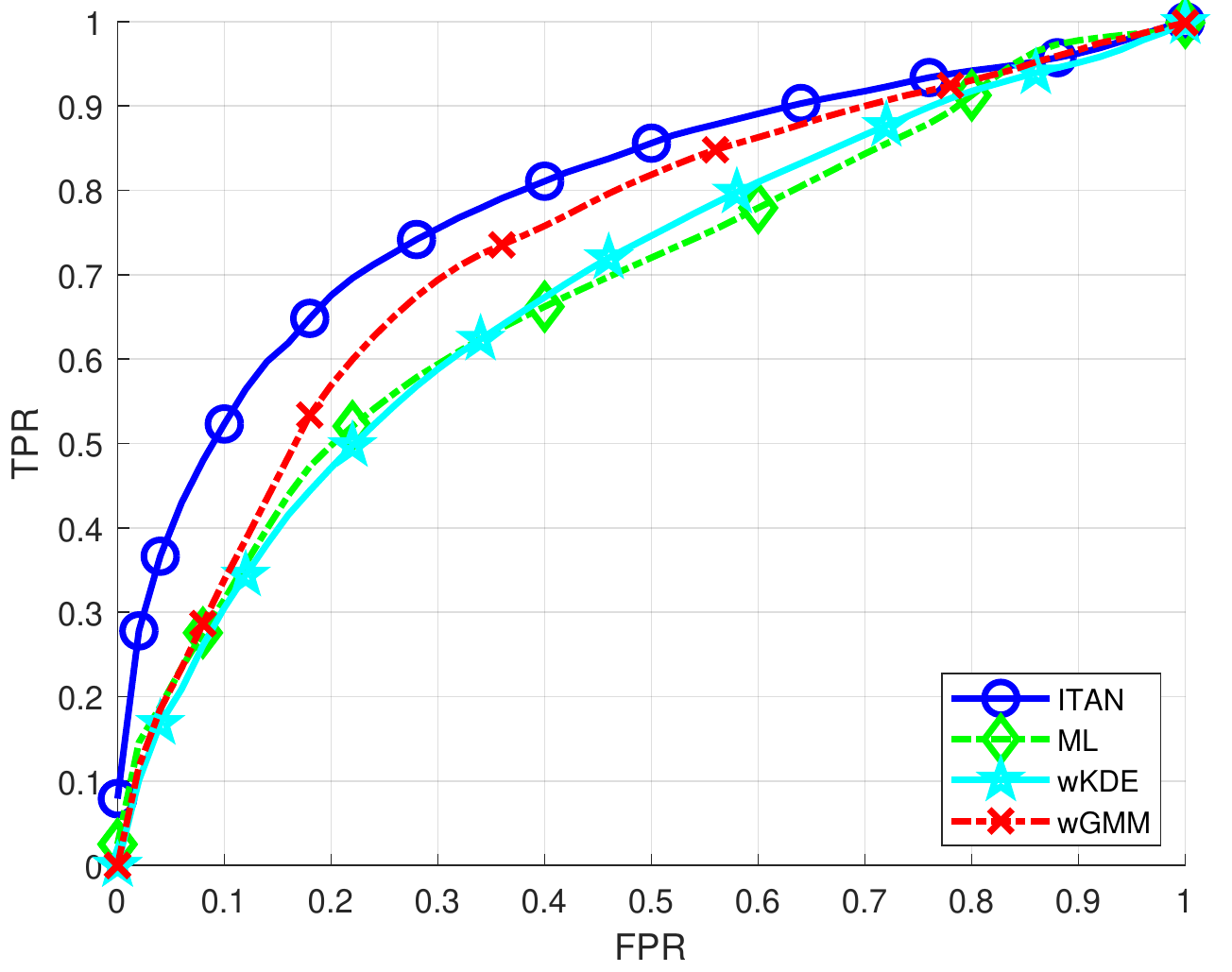}
			\caption{ROC curves of the algorithms over the datasets described in Section \ref{Ex:2}. The results are averaged over $10$ datasets.}
			\label{fig:ex2roc}
		\end{subfigure}
		\hspace*{\fill} 
		\begin{subfigure}{0.3\textwidth}\vspace{-4mm}
			\includegraphics[width=\linewidth]{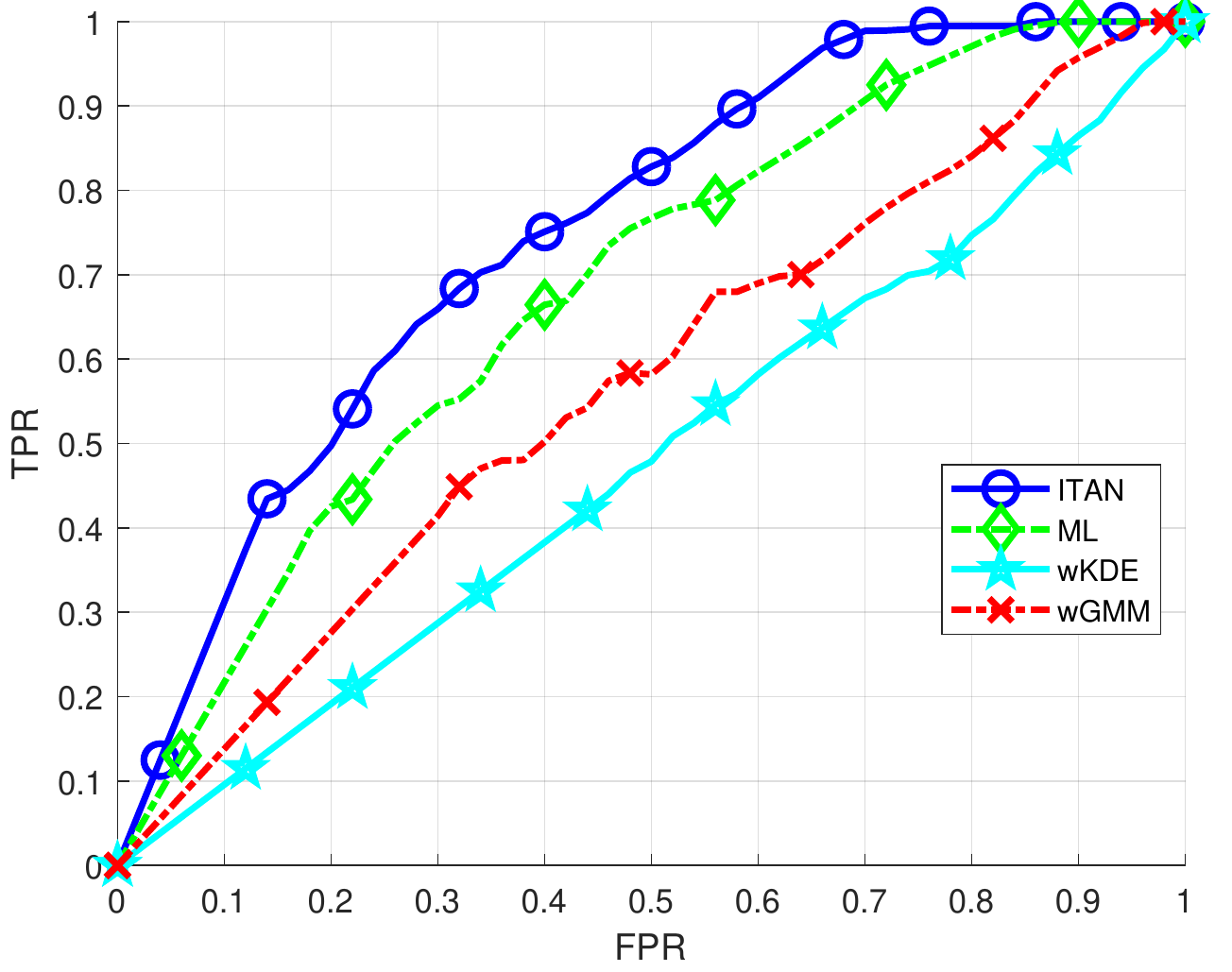}
			\caption{ROC curves of the algorithms over the Vehicle Silhouettes dataset.} 
			\label{fig:ex3roc}
		\end{subfigure}	
		\caption{The averaged density estimation loss and ROC curves of the algorithms over $3$ experiments.} 
	\end{figure*}

	As shown in Figs. \ref{fig:ex1pdf}, our algorithm achieves a significantly superior performance for the density estimation task. This superior performance was expected because in the dataset used for this experiment the components are far from each other. Hence, our tree can generate proper subspaces, which contain only the samples from one of the components of the underlying PDF, as shown in Fig. \ref{fig:combination}. For the anomaly detection task, as shown in Fig. \ref{fig:ex1roc}, our algorithm and \textit{wGMM} provide close performance, where \textit{ITAN} performs better in low FPRs and \textit{wGMM} provides superior performance in high FPRs. However, as shown in Table \ref{tab:ex1}, we achieve this performance with a significantly lower computational complexity. Comparing Fig. \ref{fig:ex1pdf} and Fig. \ref{fig:ex1roc} shows that while satisfactory log-loss performance is required for successful anomaly detection, it is not sufficient in general. For instance, while \textit{ML} algorithm performs as well as \textit{wKDE} and \textit{wGMM} in the log-loss sense, its anomaly detection performance is much worse than the others. In fact, labeling the samples exactly opposite of the suggestions of the \textit{ML} algorithm provides way better anomaly detection performance. This is because of the weakness of the model assumed by the \textit{ML} algorithm. This weak performance of the \textit{ML} algorithm was expected due to the underlying PDF of the normal and anomalous data. It can be also seen from Fig. \ref{fig:dataset1}. If we fit a single component Gaussian PDF to the normal samples shown in blue, roughly speaking, the anomalous samples shown in red will get the highest normality score when evaluated using our PDF.\par 
	
	In the next experiment, we compare the algorithms in a scenario, where the data cannot be modeled as a convex combination of Gaussian density functions.

	\subsection{Synthetic Arbitrary Distribution}\label{Ex:2}

	In this experiment, we have created $10$ datasets of length $1000$. In order to generate each sample, first its label is randomly determined to be ``normal" or ``anomalous" with probabilities of $0.9$ and $0.1$, respectively. The normal samples are $2$-dimensional vectors $\vec{x}_t=[x_{t,1},x_{t,2}]^T$ generated using the following distribution:
	\begin{equation}
	\begin{cases}
	f_{\text{normal}}({x}_{t,1})=\mathcal{U}(-1,1),\\

	f_{\text{normal}}({x}_{t,2})=\mathcal{U}(\sin{(\pi {x}_{t,1})},\sin{(\pi {x}_{t,1})}+0.2),
	\end{cases}
	\end{equation}
	where $\mathcal{U}(a,b)$ is the uniform distribution between $a$ and $b$. The anomalous samples are generated using the following distribution:
	\begin{equation}
	\begin{cases}
	f_{\text{normal}}({x}_{t,1})=\mathcal{U}(-1,1),\\

	f_{\text{normal}}({x}_{t,2})=\mathcal{U}(\cos{(\pi {x}_{t,1})},\cos{(\pi {x}_{t,1})}+0.2).
	\end{cases}
	\end{equation}
	Fig. \ref{fig:dataset2} shows the samples in one of the datasets used in this experiment to provide a clear visualization.\par
	
	Fig. \ref{fig:ex2pdf} shows the averaged accumulated loss of the algorithms averaged over 10 data sets. As shown in the figure, our algorithm outperforms the competitors for the density estimation task. This superior performance is due to the growing in time modeling power of out algorithm. The ROC curves of the algorithms for the anomaly detection task are shown in Fig. \ref{fig:ex2roc}. This figure shows that our algorithm provides superior anomaly detection performance as well. This superior performance is due to the better approximation of the underlying PDF made by our algorithm. The averaged log-loss performance, AUC results and running time of the algorithms in this experiment are summarized in Table. \ref{tab:ex2}.

	\begin{table}[t]
		\centering
		\caption{``Log-Loss", ``AUC" and ``Running Time" of the algorithms over the datasets described in Section \ref{Ex:2}. The AUC and Running Time values are in the format of ``mean value $\pm$ standard deviation. }
		\label{tab:ex2}
		\begin{tabular}{ |p{1.2cm}||p{1.1cm}|p{2.2cm}|p{2.4cm}|  }
			\hline
			Algorithm& Log-Loss & Area Under Curve  & Running Time (ms)\\
			\hline
			ITAN  & $0.833$& $0.7962 \pm 0.0757$    & $315.85 \pm 13.67$ \\
			wGMM   & $1.303$&$0.7381 \pm 0.0285$ & $7167.78 \pm 140.99$\\
			wKDE & $1.455$&   $0.6863 \pm 0.0321$  & $283.21 \pm 1.10$\\
			ML & $1.337$&   $0.6859 \pm 0.0323$  & $34.82 \pm 0.43$\\
			\hline
		\end{tabular}
	\end{table}
	
	\subsection{Real multi-class dataset}\label{Ex:3}
	
	In this experiment, we use Vehicle Silhouettes \cite{vehicle} dataset. This dataset contains $846$ samples. Each sample includes a $18$-dimensional feature vector extracted from an image of a vehicle. The labels are the vehicle class among $4$ possible classes of ``Opel", ``Saab", ``Bus" and ``Van". Our objective in this experiment is to detect the vehicles with ``Van" labels as our anomalies. Fig. \ref{fig:ex3pdf}, shows the density estimation loss of the opponents, based on the rounds in which they have observed ``normal" samples. Fig. \ref{fig:ex3roc} shows the ROC curves of the algorithms. As shown in the figures, our algorithm achieves a significantly superior performance in both density estimation and anomaly detection tasks over this dataset. The AUC results and running time of the algorithms are summarized in Table. 2.\par 
	
	Fig. \ref{fig:ex3pdf} shows that the performance of \textit{wGMM} highly depends on the stationarity of normal samples stream. The intrinsic abrupt change of the underlying model at around round $250$ has caused a heavy degradation in its density estimation performance. However, our algorithm shows a robust log-loss performance even in the case of non-stationarity. Fig. \ref{fig:ex3roc} shows that our algorithm achieves the best anomaly detection performance among the competitors. Note that \textit{ML} algorithm outperforms both \textit{wGMM} and \textit{wKDE} algorithms in both density estimation and anomaly detection tasks. This is because \textit{wGMM} and \textit{wKDE} suffer from overtraining due to the high dimensionality of the sample vectors and short time horizon of the experiment. However, due to the growing tree structure used in our algorithm, we significantly outperform the \textit{ML} algorithm and provide highly superior and more robust performance compared to the all other three algorithms.

	\begin{table}
		\label{tab:ex3}
		\centering
		\caption{``Log-Loss", ``AUC" and ``Running Time" of the algorithms over the Vehicle Silhouettes dataset. The running time values are in the format of ``mean value $\pm$ standard deviation". }
		\begin{tabular}{ |p{1.2cm}||p{1.1cm}|p{2.2cm}|p{2.4cm}|  }
			\hline
			Algorithm& Log-Loss & Area Under Curve  & Running Time (ms)\\
			\hline
			ITAN  & $3.507$& $0.7483 $    & $466.33 \pm 13.45$ \\
			wGMM   & $6.346$&$0.5682 $ & $4646.87 \pm 140.14$\\
			wKDE & $7.684$&   $0.4797 $  & $276.58 \pm 6.04$\\
			ML & $4.702$&   $0.6806 $  & $34.71 \pm 4.41$\\
			\hline
		\end{tabular}
	\end{table}
	
	\section{Concluding Remarks}\label{conclusion}
	We studied the sequential outlier detection problem and introduced a highly efficient algorithm to detect outliers or anomalous samples in a series of observations. We use a two-stage method, where we learn a PDF that best describes the normal samples, and decide on the label of the new observations based on their conformity to our model of normal samples. Our algorithm uses an incremental decision tree to split the observation space into subspaces whose number grow in time. A single component PDF is trained using the samples inside each subspace. These PDFs are adaptively combined to form our multi-modal density function. Using the aforementioned incremental decision tree, while avoiding overtraining issues, our modeling power increases as we observe more samples. We threshold our density function to decide on the label of new observations using an adaptive thresholding scheme. We prove performance upper bounds for both density estimation and thresholding stages of our algorithm. Due to our competitive algorithm framework, we refrain from any statistical assumptions on the underlying normal data and our performance bounds are guaranteed to hold in an individual sequence manner. Through extensive set of experiments involving synthetic and real datasets, we demonstrate the significant performance gains of our algorithm compared to the state-of-the-art methods.
	
	\bibliographystyle{IEEEbib}
	\bibliography{Ref}

\end{document}